\documentclass{article}
\usepackage{journal}
\jmlrheading{}{2026}{}{}{}{O. Montasser}
\ShortHeadings{Montasser}{Sample and Oracle Complexity of Robust Learning}
\usepackage{amsmath,amssymb,amsthm,mathtools}
\usepackage{charter}
\usepackage{dsfont}
\usepackage{thmtools}
\usepackage{algorithm}
\usepackage[algo2e,ruled,linesnumbered,vlined]{algorithm2e}
\usepackage[noend]{algpseudocode}
\usepackage{microtype}
\usepackage[dvipsnames]{xcolor}
\usepackage{hyperref}
\usepackage{prettyref}
\usepackage[capitalise,nameinlink]{cleveref}
\usepackage{nicefrac}
\usepackage{enumitem}
\hypersetup{colorlinks=true,citecolor=Maroon,linkcolor=Maroon,urlcolor=Maroon}
\usepackage{comment}
\usepackage{mathrsfs}
\usepackage{tikz}
\usetikzlibrary{arrows.meta}

\newrefformat{alg}{\Cref{#1}}
\newrefformat{thm}{\Cref{#1}}
\newrefformat{corr}{\Cref{#1}}
\newrefformat{lem}{\Cref{#1}}
\newrefformat{app}{Appendix~\ref{#1}}

\newtheorem{lemma}{Lemma}
\newtheorem{corollary}{Corollary}
\theoremstyle{definition}
\newtheorem{remark}{Remark}

\newcommand{\X}{\mathcal X}
\newcommand{\Y}{\mathcal Y}
\newcommand{\Cc}{\mathcal C}
\newcommand{\Fc}{\mathcal F}
\newcommand{\Uc}{\mathcal U}
\newcommand{\E}{\mathds E}
\newcommand{\B}{\mathrm B}
\newcommand{\Risk}{{\mathrm R}_\Uc}
\newcommand{\Prob}{\mathds P}
\newcommand{\1}{\mathds 1}
\newcommand{\vc}{\mathrm{VCdim}}
\newcommand{\eps}{\varepsilon}
\newcommand{\prn}[1]{\left(#1\right)}
\newcommand{\brk}[1]{\left[#1\right]}
\newcommand{\set}[1]{\left\{#1\right\}}
\newcommand{\ceil}[1]{\left\lceil#1\right\rceil}
\newcommand{\Ex}[1]{\underset{#1}{\E}}
\newcommand{\ERM}{\mathrm{ERM}}
\newcommand{\RERM}{\mathrm{RERM}}
\newcommand{\MAJ}{\mathrm{MAJ}}

\newcommand{\Law}{\operatorname{Law}}
\newcommand{\Unif}{\operatorname{Unif}}
\newcommand{\KL}{\operatorname{KL}}
\newcommand{\TV}{\operatorname{TV}}
\newcommand{\Alg}{\mathop{\mathsf{ALG}}\nolimits}

\title{Bagging Robustly Learns VC Classes \\with Linear Sample Complexity}
\author{\name{Omar Montasser} \email{omar.montasser@yale.edu}\\
    \addr Yale University\\}

\begin{document}

\maketitle

\begin{abstract}
    We revisit the problem of learning predictors robust to adversarial examples at test-time. We prove that VC classes are adversarially robustly learnable with sample complexity {\em linear} in the VC dimension $d$, providing an {\em exponential} improvement over the previous upper bound of \citet*{DBLP:conf/colt/MontasserHS19}. Remarkably, this result is achieved with a simple improper algorithm that combines the classic heuristic {\em bagging} (bootstrap aggregation) of \citet*{DBLP:journals/ml/Breiman96b} with {\em robust empirical risk minimization} (RERM). Our algorithm computes RERMs on $O(d^\star)$ indpendent bootstrap samples and outputs their majority-vote, where $d^\star$ denotes the dual VC dimension. We complement this result with a lower bound showing that this is unavoidable: in general, any learner in this oracle model requires $\Omega(d^\star)$ calls to an $\RERM$ oracle, even when given arbitrarily many training examples.
\end{abstract}
    
\section{Introduction}

Learning predictors robust to adversarial examples is a major contemporary challenge in machine learning. Adversarial examples can be thought of as carefully crafted perturbations of test examples that cause predictors to misclassify. Over the past decade, there has been a significant interest in how deep learning predictors are {\em not} robust to adversarial examples \citep[][]{szegedy2013intriguing,biggio2013evasion,DBLP:journals/corr/GoodfellowSS14}, leading to an ongoing effort to devise methods for learning predictors that are adversarially robust \citep[e.g.,][]{DBLP:conf/iclr/MadryMSTV18, DBLP:conf/icml/CohenRK19, DBLP:conf/icml/ZhangYJXGJ19}.

Given an instance space $\X$ and label space $\Y=\set{-1,1}$, we formalize an adversary (or perturbation map) we would like to be robust against as $\Uc: \X \to 2^{\X}$, where $\Uc(x)\subseteq \X$ is the set of perturbations of $x$ that can be chosen by the adversary at test-time. For example,~$\mathcal{U}$ could be perturbations of bounded $\ell_{\infty}$ norm which represents ``imperceptible'' image perturbations in practice \citep{DBLP:journals/corr/GoodfellowSS14}. The only (implicit) restriction is that $\Uc(x)$ is non-empty for every $x$. We observe $n$ i.i.d.~samples $S=((X_i, Y_i))_{i=1}^n\sim P^n$ from an (unknown) distribution $P$ over $\X \times \Y$, and our goal is to learn a predictor $\hat{f}:\X\to \Y$ having small population {\em robust} risk,

\begin{equation}
    \label{eqn:robrisk}
    \Risk(\hat{f}; P) = \Ex{(X,Y)\sim P} \brk{ \sup_{Z\in \Uc(X)} \1\set{\hat{f}(Z)\neq Y}}.
\end{equation}

A central approach to adversarially robust learning in practice is {\em adversarial training}, where model parameters are trained to fit the training examples and their corresponding adversarial perturbations \citep[e.g.,][]{DBLP:conf/iclr/MadryMSTV18, DBLP:conf/icml/ZhangYJXGJ19}. Theoretically, this can be viewed as minimizing the empirical robust risk over a function class $\Fc \subseteq \Y^\X$ (e.g., neural networks):
\begin{equation}
    \label{eqn:rerm}
    \widehat{f}_S \in \RERM_\Fc(S):= \underset{f\in \Fc}{\arg\min}\quad \frac{1}{n} \sum_{i=1}^{n} \sup_{Z_i\in \Uc(X_i)} \1\set{f(Z_i) \neq Y_i}.
\end{equation}
This approach is justified by classical uniform convergence arguments applied to the robust risk, i.e.,  arguing that with sufficiently many training examples, the robust generalization gap $|\Risk(f;P)-\Risk(f;S)|$ is small for all $f\in \Fc$ \citep*{DBLP:conf/nips/CullinaBM18,bubeck2019adversarial,DBLP:conf/icml/YinRB19}. However, empirical evidence shows \textit{severe overfitting}: adversarial training yields models with low empirical robust risk but high population robust risk \citep*{schmidt2018adversarially}. This discrepancy suggests that adversarial training and uniform convergence alone may not guarantee provable robust learning.

This empirical observation is indeed mirrored by a fundamental theoretical limitation. \citet*{DBLP:conf/colt/MontasserHS19} showed that minimizing the empirical robust risk as in \eqref{eqn:rerm}--or any surrogate thereof--can {\em provably fail}, even in simple settings. Specifically, there are function classes $\Fc$ with VC dimension $1$ that are {\em not} adversarially robustly learnable with any {\em proper} learning algorithm, namely, one constrained to output a predictor $\hat{f}\in\Fc$. To bypass the limitation of proper learning, \citet*{DBLP:conf/colt/MontasserHS19} designed an {\em improper} learning algorithm to adversarially robustly learn classes $\Fc$ with finite VC dimension. This separation between proper and improper learning stands in sharp contrast to classical (non-robust) PAC learning \citep{DBLP:journals/cacm/Valiant84}, where empirical risk minimization ($\ERM$), a proper learning rule, learns VC classes with near-optimal sample complexity \citep*{VapnikChervonenkis71,DBLP:journals/cacm/Valiant84, DBLP:journals/jacm/BlumerEHW89, DBLP:journals/iandc/EhrenfeuchtHKV89}.

Despite this progress in adversarially robust learning, two fundamental challenges remain concerning: {\em sample-efficiency} and {\em oracle-efficiency}. First, the best known upper bound on the sample complexity is {\em exponential} in the VC dimension $d$ \citep*{DBLP:conf/colt/MontasserHS19}. Second, the improper learning algorithm attaining this bound is quite complex and inefficient; it makes $O(n^{d})$ oracle calls to $\RERM$ \eqref{eqn:rerm} with a prohibitive additional computational overhead of $n^{2^{O(d)}}$. These challenges motivate the central question we study in this work:
\begin{center}
    \textit{Can we design sample-efficient and oracle-efficient adversarially robust learning algorithms?\\Or are there fundamental limitations prohibiting that?}
\end{center}
In addition to our goal of improving the sample complexity of adversarially robust learning, we are especially interested in oracle-efficient algorithms. Just as efficient PAC learning reduces to efficient $\ERM$, this work seeks an analogous recipe for robustness: reducing efficient robust PAC learning of a class $\Fc$ to efficient $\RERM$. In this context, we view $\RERM$ \eqref{eqn:rerm} as a natural black-box oracle, since adversarial training methods in practice \citep[e.g.,][]{DBLP:conf/iclr/MadryMSTV18, DBLP:conf/icml/ZhangYJXGJ19} can be viewed as heuristics implementing $\RERM$. Theoretically, oracle-efficient robust learners would strengthen reductions for leveraging non-robust learners \citep*{DBLP:conf/nips/MontasserHS20}, handling unknown perturbations \citep*{DBLP:conf/colt/MontasserHS21}, and achieving computational efficiency \citep*{DBLP:conf/icml/MontasserGDS20}, thereby unifying algorithmic efficiency with provable robustness.

\subsection{Main Results}

In this work, we prove that VC classes are adversarially robustly learnable with sample complexity {\em linear} in the VC dimension, providing an {\em exponential} improvement over the best previous bound due to \citet*{DBLP:conf/colt/MontasserHS19}. Remarkably, this result is achieved with a simple improper algorithm that combines the classic heuristic {\em bagging} (bootstrap aggregation) due to \citet*{DBLP:journals/ml/Breiman96b} with robust empirical risk minimization ($\RERM$, \eqref{eqn:rerm}). Given a training sample $S$, the algorithm constructs $N$ independent {\em bootstrap samples} $S'_1,\dots, S'_N$ (as specified below), runs $\RERM_\Fc$ on each bootstrap sample $S'_i$ to produce a predictor $\widehat{f}_{S'_i}\in \Fc$, and finally outputs a majority-vote over $\widehat{f}_{S'_1},\dots, \widehat{f}_{S'_N}$. The formal procedure appears in \prettyref{alg:bagging}, followed by its guarantee.

\refstepcounter{algocf}
\label{alg:bagging}
{\setlength{\fboxsep}{4pt}
    \begin{center}
        \fbox{\begin{minipage}{0.96\linewidth}
                {
                    \centering \textbf{Algorithm~\thealgocf: Bagging Robust ERMs}\\
                }
                \textbf{Input:} Training set $S=((X_j,Y_j))_{j=1}^n$, confidence $\delta$, and an $\RERM_\Fc$ oracle $\widehat f$ \eqref{eqn:rerm}.\par
                1. Set $N=O(d^\star+\log(1/\delta))$, and $J_n=\set{n/4,\dots,n-1}$.\par
                2. For each $i=1,\dots,N$:\par
                \hspace*{1.25em}3. Sample $t$ uniformly from $J_n$.\par
                \hspace*{1.25em}4. Sample a bootstrap $S'_i$ of $t$ samples uniformly with replacement from $S_{\leq t}=((X_j, Y_j))_{j=1}^{t}$.\par
                \hspace*{1.25em}5. Run $\RERM_\Fc$ oracle $\widehat{f}$ on $S'_{i}$, denoting its output predictor by $\widehat{f}_{S'_i}$.\par
                \textbf{Output:} The majority-vote predictor $\MAJ\prn{\widehat f_{S'_1},\dots,\widehat f_{S'_N}}$.
        \end{minipage}}
        \smallskip
    \end{center}
}

\begin{restatable}[Realizable]{theorem}{realizablethm}
    \label{thm:linearvc}
    For any function class $\Fc$ with VC dimension $d$ and dual VC dimension $d^\star$, any perturbation set $\Uc$, any deterministic $\RERM_\Fc$ oracle $\widehat{f}$, any distribution $P$ over $\X\times \Y$ where $\inf_{f\in\Fc}\Risk(f; P)=0$, for every $n\geq 4$ and every $\delta \in (0,1)$, letting $N=O(d^\star+\log(1/\delta))$, with probability $1-\delta$ over the random draw of a training dataset $S\sim P^n$ and $N$ bootstrap samples $S'_1\dots, S'_N \subseteq S$,
    \[
        \Risk\prn{ \MAJ(\widehat{f}_{S'_1},\dots, \widehat{f}_{S'_N}); P} = O\prn{\frac{d}{n} + \frac{1}{n}\log\prn{\frac{1}{\delta}}}.
    \]
\end{restatable}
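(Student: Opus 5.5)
Our strategy is to reduce the robust guarantee to a bound on an ``average'' predictor, and then pass to the majority vote through the dual VC dimension. Concretely, we define the randomized (Gibbs-type) predictor $Q_S$: draw $t\sim\Unif(J_n)$, draw a bootstrap $S'$ of size $t$ from $S_{\le t}$, and output $\widehat f_{S'}$. We also define the induced score $g_S(z)=\Prob_{S'}[\widehat f_{S'}(z)=1]$. The plan has three steps.

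\textbf{Step 1.} We show that, in expectation over $S$, the ``margin-robust loss'' of the averaged predictor is small:
\[
\Ex{S\sim P^n}\,\Prob_{(X,Y)\sim P}\Bigl[\exists Z\in\Uc(X):\ \Prob_{S'}[\widehat f_{S'}(Z)\ne Y]\ge 1/3\Bigr]=O(d/n).
\]
Since majority vote errs at $Z$ only when at least half the voters err there, this is the right quantity.

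\textbf{Step 2.} We pass from $g_S$ to the finite vote with $N$ samples. The difficulty is that the supremum over $Z\in\Uc(X)$ ranges over possibly infinitely many points. Here the dual VC dimension enters. The sets $\{f: f(z)\ne y\}$, indexed by $(z,y)$, form a class of VC dimension $O(d^\star)$ over the hypothesis space. By a uniform $\eps$-approximation argument with $\eps=1/6$ (a relative/uniform convergence bound for VC classes), $N=O(d^\star+\log(1/\delta))$ i.i.d.\ draws $\widehat f_{S'_1},\dots,\widehat f_{S'_N}$ suffice so that, with probability $1-\delta/2$, the empirical vote fraction is within $1/6$ of $g_S$ simultaneously for all $z$. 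Consequently, on the robust-error event of the majority vote, some $Z\in\Uc(X)$ has $\Prob_{S'}[\widehat f_{S'}(Z)\ne Y]\ge 1/3$, and the risk of $\MAJ$ is at most the margin-robust loss of Step 1.

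\textbf{Step 3.} We upgrade the expectation bound to high probability. We first try a bounded-differences / boosting-the-confidence style argument. The alternative is to note that the random choice of $t$ makes the averaged predictor stable, and then apply a concentration bound for stable algorithms, giving the additional $\log(1/\delta)/n$ term.

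\textbf{Main obstacle: Step 1.} We would prove it by a leave-one-out / online-to-batch argument in the spirit of the one-inclusion graph and the Hanneke/Larsen analyses of bagging ERM. The uniform choice of $t\in J_n$ lets us write the expected risk at sample size $t+1$ as an average of ``prediction on the $(t+1)$-th point.'' By exchangeability, the quantity in Step 1 is at most an average over $t$ of the probability that the fresh example $(X_{t+1},Y_{t+1})$ is robustly misclassified by at least a $1/3$ fraction of RERMs trained on bootstraps of $S_{\le t}$.

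The key structural fact is that bootstraps of $S_{\le t}$ sometimes contain $(X_{t+1},Y_{t+1})$ once we view the bootstrap from $S_{\le t+1}$. Realizability then forces every RERM trained on a sample containing that point to robustly fit it, since the RERM value is $0$. This should be combined with a counting argument that bounds, via the VC dimension $d$ (through Sauer--Shelah on the robust loss class restricted to the $n$ points), how many indices can be ``$1/3$-robustly-bad'' for the bagged predictor. We expect this combinatorial charging, linear in $d$ without log factors, to require the most care. We would try a potential argument in which each such index forces a distinct labeling pattern among the hypotheses output on bootstraps.
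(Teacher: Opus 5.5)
\textbf{There is a genuine gap in Step~1, and Step~3 does not work as stated.} Your Step~2 is the paper's sparsification step over the dual class. Your observation that a bootstrap containing $(X_i,Y_i)$ forces the RERM to be robustly correct there is also the one the paper uses.

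\textbf{Step~1.} This step carries essentially all of the difficulty, and you leave it as an unspecified ``combinatorial charging''. The tool you name cannot work. Sauer--Shelah applied to the \emph{robust} loss class $\{(x,y)\mapsto\sup_{z\in\Uc(x)}\1\{f(z)\neq y\}:f\in\Fc\}$ gives nothing in terms of $d$, because that class can have infinite VC dimension even when $d=1$. For example, take indicators of singletons $\{x_i\}$ with $\Uc(c_V)=V$, and let $V_j$ collect the $x_i$ whose $j$-th binary digit is $1$. Then the robust losses at $(c_{V_1},-1),\dots,(c_{V_m},-1)$ realize all $2^m$ patterns. This is exactly why proper RERM fails.

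The missing idea is to put the supremum \emph{outside} the probability over bootstraps, $a(x,y)=\sup_{z\in\Uc(x)}\Pr_S[\widehat f_S(z)\neq y]$. One then fixes, for each test point, a witness perturbation chosen independently of the sample: an approximate maximizer of $z\mapsto\Pr_S[\widehat f_S(z)\neq y]$ over $\Uc(x)$. With this witness map:
\begin{itemize}
\item the induced non-robust error sets form a class of VC dimension at most $d$;
\item RERM is a consistent selector for that class;
\item the second-moment bound for consistent learners (\prettyref{lem:vc-moment-2}) plus Markov gives $\Pr[a(X,Y)\geq\theta]\leq c\,d/(\theta^2 n)$ (\prettyref{lem:key}).
\end{itemize}
You then apply this to the robustly realizable empirical distribution $\Unif(T)$. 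A bootstrap of $T$ that omits index $i$ is a bootstrap of $T_{-i}$, and this happens with probability $(1-1/n)^{n-1}\geq 1/e$. Together these give the leave-one-out margin bound (\prettyref{lem:leave-one-out-margin}).

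Even after witnesses are fixed, the standard counting (first-moment) route only gives $O(d\log(n/d)/n)$, and Markov's inequality then costs the log factor the theorem avoids. The log-free rate rests on the second-moment bound, which your sketch does not supply.

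\textbf{Step~3.} Each of the routes you suggest fails:
\begin{itemize}
\item Boosting the confidence changes the algorithm, whereas the theorem is about Algorithm~1 itself.
\item A plain bounded-differences argument gives deviations of order $\sqrt{\log(1/\delta)/n}$ at best, not $\log(1/\delta)/n$.
\item There is no evident uniform stability for this indicator-type loss, since altering one early example can change every $\widehat{\B}_{S_{\leq t}}$.
\end{itemize}
The paper instead uses a suffix-averaging martingale argument (\prettyref{lem:high-prop}). It needs only the leave-one-out bound and permutation invariance of $\widehat{\B}$, and it is the actual reason the algorithm draws $t$ from $J_n$.

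That argument controls the \emph{average over $t$} of the per-suffix margin-robust risks, not the margin loss of the averaged predictor. So you also need an averaging-of-margins step (\prettyref{lem:margin-average}). Your assertion that the Step~1 quantity for $g_S$ is at most the average over $t$ of the per-suffix quantities at the same threshold $1/3$ fails pointwise. A counterexample is half the suffixes with error fraction $2/3$ and half with $0$. It can be repaired by lowering the per-suffix threshold to $1/6$, at the cost of a factor $5$.
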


We remark that the oracle complexity (number of calls to $\RERM$) of our algorithm above satisfies $N=O(d^\star)$ where $d^\star$ denotes the dual VC dimension of $\Fc$\footnote{The dual VC dimension, which is the VC dimension of the dual class, is known to satisfy $d^\star \leq 2^{d+1} - 1$ \citep*{Assouad83}, which is tight for some classes but for many natural classes $d^\star = d$ (e.g., halfspaces) or polynomially related $d^\star = {\rm poly}(d)$ (e.g., feed-forward network with threshold activations) \citep*{DBLP:conf/nips/MontasserHS20}}, with a computational overhead of just $O(d^\star n)$. This is a significant improvement over the prior improper algorithm of \citet*{DBLP:conf/colt/MontasserHS19}, which incurred oracle complexity of $O(n^d)$ and additional computational overhead of $O(n^{dd^\star})$. Another significant advantage of our bagging algorithm,  which is of practical importance, is that it can be {\em parallelized} since the $N$ bootstrap samples are independent. The prior improper algorithm of \citet*{DBLP:conf/colt/MontasserHS19} relies on boosting which is inherently sequential and cannot be parallelized in general \citep{DBLP:conf/alt/KarbasiL24}.

By combining the agnostic-to-realizable reduction of \citet*[][Theorem 8]{DBLP:conf/colt/MontasserHS19} with the realizable guarantee of \prettyref{thm:linearvc}, we obtain the following immediate corollary improving the sample and oracle complexity of {\em agnostic} adversarially robust learning.
\begin{corollary}[Agnostic]
    \label{corr:agnostic}
    There is an algorithm $\Alg$ so that for any class $\Fc$ with VC dimension $d$ and dual VC dimension $d^\star$, any perturbation set $\Uc$, any deterministic $\RERM_\Fc$ oracle $\widehat{f}$, the following holds. For any distribution $P$ over $\X\times \Y$, for every $n\geq 4$ and every $\delta \in (0,1)$, with probability $1-\delta$ over $S\sim P^n$ and randomness of $\Alg$, $\Alg$ makes at most $O(d^\star\prn{\log(n)+\log(1/\delta)})$ calls to oracle $\widehat{f}$ and returns a predictor $\widehat{h}_S$ satisfying
    \[
        \Risk\prn{\widehat{h}_S; P} \leq \inf_{f\in \Fc} \Risk(f; P) + O\prn{\sqrt{\frac{d}{n}\log^2(n)+\frac{1}{n}\log\prn{\frac{1}{\delta}}}}.
    \]
\end{corollary}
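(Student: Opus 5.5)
The plan is to instantiate the agnostic-to-realizable reduction of \citet*[][Theorem 8]{DBLP:conf/colt/MontasserHS19}, using \prettyref{alg:bagging} with \prettyref{thm:linearvc} as the realizable subroutine. The reduction should cost $O(\log n)$ rounds, each making $O(d^\star)$ oracle calls. Concretely, $\Alg$ proceeds in four steps.

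\textbf{Step 1 (realizable subsample).} Call $\RERM_\Fc$ once on the whole sample $S$ to get $\widehat f_S$. Let $S_{\rm r}\subseteq S$ be the examples on which $\widehat f_S$ is robustly correct, so that $S_{\rm r}$ is robustly realizable by $\Fc$. Then $|S\setminus S_{\rm r}|/n = \min_{f\in\Fc}\Risk(f;S)$, and by Hoeffding applied to a fixed near-optimal $f^\star$,
\[
\min_{f\in\Fc}\Risk(f;S)\le \inf_{f\in\Fc}\Risk(f;P)+O\prn{\sqrt{\log(1/\delta)/n}}.
\]

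\textbf{Step 2 (boosting).} Run $\alpha$-Boost on $S_{\rm r}$ for $T=O(\log n)$ rounds. In round $t$, with current weights $D_t$ on $S_{\rm r}$, draw $m_0=O(d)$ points from $D_t$. Run \prettyref{alg:bagging} on them with constant confidence, so that $N=O(d^\star)$. Since $D_t$ is robustly realizable, \prettyref{thm:linearvc} with $n=m_0$ a large enough multiple of $d$ gives $\Risk(h_t;D_t)\le 1/3$ with constant probability. This robust error on $D_t$ is evaluated exactly on $S_{\rm r}$, as in the reduction of \citet*{DBLP:conf/colt/MontasserHS19}, and the round is repeated until it succeeds. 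Over all rounds, $O(\log n+\log(1/\delta))$ attempts suffice with probability $1-\delta$, each costing $O(d^\star)$ calls. This yields the stated oracle complexity of $O(d^\star(\log n+\log(1/\delta)))$, which covers the single call of Step 1 as well.

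The standard $\alpha$-Boost margin argument then shows that, for every $(x,y)\in S_{\rm r}$, more than half of the $h_t$ are robustly correct at $(x,y)$. Hence for every $z\in\Uc(x)$ the majority of $h_1,\dots,h_T$ outputs $y$. The final predictor $\widehat h_S=\MAJ(h_1,\dots,h_T)$ therefore has zero robust loss on $S_{\rm r}$.

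\textbf{Step 3 (compression scheme).} $\widehat h_S$ is determined by the $T m_0=O(d\log n)$ sampled points, together with the bootstrap index choices inside each bagging call, which are indices into those same points. Because the oracle is deterministic, this is a sample compression scheme of size $k=O(d\log n)$. Apply the agnostic compression generalization bound for the robust loss, which holds for any loss because the reconstruction depends only on the compressed set \citep*{DBLP:conf/colt/MontasserHS19}:
\[
\Risk(\widehat h_S;P)\le \Risk(\widehat h_S;S)+O\prn{\sqrt{\frac{k\log n+\log(1/\delta)}{n}}}.
\]
Since $\widehat h_S$ has zero robust loss on $S_{\rm r}$, we have $\Risk(\widehat h_S;S)\le |S\setminus S_{\rm r}|/n$. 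Combining this with Step 1 gives the claimed bound $\sqrt{\frac{d}{n}\log^2 n+\frac1n\log\frac1\delta}$.

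\textbf{Step 4 (main obstacle).} I expect the hardest part to be making the boosting-plus-bagging object a genuine fixed-size compression scheme. The internal randomness must be encoded inside the compression set, and the retry loop must not inflate the size $k$. Handling this needs care but no new ideas: the retries only change which $m_0$ points are stored, never how many.
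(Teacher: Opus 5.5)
Your overall route is the one the paper intends. You make one $\RERM_\Fc$ call to extract a maximum robustly realizable subsequence $S_{\mathrm r}$. You then run $\alpha$-Boost on $S_{\mathrm r}$, with robust $0/1$ correctness as the indicator and \prettyref{alg:bagging} at constant $(\eps_0,\delta_0)$ as the weak learner, and output an unweighted majority of the $T=O(\log n)$ weak hypotheses. Finally you apply the agnostic compression bound behind \citet*[][Theorem 8]{DBLP:conf/colt/MontasserHS19} with $k=Tm_0=O(d\log n)$. Steps 1 and 2 are fine, including the verification on $S_{\mathrm r}$, the $O(\log n+\log(1/\delta))$ attempt count, and the oracle accounting.

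The step that does not go through as written is the claim in Steps 3--4 that $\widehat h_S$ is a size-$k$ compression scheme.

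\emph{Encoding the bootstraps is too expensive.} The bootstrap draws are not determined by the stored points. Encoding them, whether ``inside the compression set'' or as side information, costs on the order of $Nm_0\log m_0$ bits per round, which would put a $d\,d^\star$ factor into the bound.

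\emph{The right treatment.} Generate the bootstrap indices as \emph{positions} within each stored block, from a seed that is independent of $S$. Then apply the compression bound conditionally on that seed, so the reconstruction is a fixed function of the stored sequence.

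\emph{Retries make the seed data-dependent.} Your remark that retries ``only change which $m_0$ points are stored'' misses this. Each attempt needs a fresh seed: with one seed per round, the success probability over fresh points alone is not guaranteed to be bounded below. Acceptance is decided by the robust error on all of $S_{\mathrm r}$, so the seeds entering the final predictor depend on the data.

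\emph{The fix.} Also pass the accepted attempt indices $a_1<\dots<a_T\le M$, with $M=O(\log n+\log(1/\delta))$, to the reconstruction. Equivalently, take a union bound over $\binom{M}{T}$ extra choices. This adds only $T\log(eM/T)=O(\log n+\log(1/\delta))$ to the numerator of the compression bound, which is absorbed, and the stated rate then follows.
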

The agnostic algorithm $\Alg$ works as follows. Denote by $\mathcal{W}$ an instantiation of \prettyref{alg:bagging} with fixed parameters $\eps_0=1/3,\delta_0=1/3$, to be treated as a {\em weak} robust learner. Given a training sample $S$, call $\RERM_\Fc$ once on $S$ to find a maximal subsequence $S'$ that is robustly realizable. Then, run a classical boosting algorithm \citep[e.g.,~$\alpha$-Boost,][Section 6.4.2]{SchapireFreund12} on $S'$ with $\mathcal{W}$ as the weak learner. The final output is majority-of-majority of predictors in $\Fc$.

\begin{remark} [Computational Efficiency]
    Our results provide a recipe for computationally efficient robust PAC learning. In particular, if a concept class $\Fc$ and perturbation set $\Uc$ admit an efficient algorithm implementing $\RERM_\Fc$ \eqref{eqn:rerm}, then \prettyref{thm:linearvc} implies that $\Fc$ is efficiently adversarially robustly PAC learnable with respect to $\Uc$ in the realizable setting. For the agnostic setting, the same implication follows from \prettyref{corr:agnostic} under an additional requirement that the robust loss for $f\in \Fc$ can be evaluated efficiently.
\end{remark}

In light of \prettyref{thm:linearvc}, a natural follow-up question is whether the oracle complexity can be further improved. Because the dual VC dimension satisfies $d^\star \leq 2^{d+1}-1$ with the inequality being tight for some classes \citep*{Assouad83}, we ask whether the oracle complexity can instead be bounded by a polynomial or even a linear function of the VC dimension $d$. Recent advances in classical PAC learning suggest that such an improvement might be possible. \citet{DBLP:conf/colt/Larsen23} showed that bagging with $O(\log n)$ calls to an $\ERM$ oracle yields an optimal PAC learner, and subsequent work reduced this number to just three independent $\ERM$ calls that are combined by a majority-vote \citep*{DBLP:conf/colt/Aden-AliHLZ24,DBLP:journals/corr/abs-2606-13614}. Can an analogous guarantee be obtained for adversarially robust learning? Our second main result gives a negative answer: unlike in classical PAC learning, the oracle complexity of adversarially robust learning must inherently depend on the dual VC dimension $d^\star$. We state this result formally below.

\begin{restatable}{theorem}{oraclelowerboundthm}\label{thm:lowerbound}
    For every integers $d,m\geq1$, there is an instance space
    $\X$, a perturbation map $\Uc$, and a finite collection of classes $\mathscr{F}=\set{\Fc}$ where each class $\Fc$ has VC dimension $d$ and dual VC dimension $d^\star=2^{d+1}-1$ such that the following holds. For any (randomized) learning algorithm $\Alg$ making at most $d^\star-1$ calls to $\RERM$, there exists a class $\Fc \in \mathscr{F}$ and a distribution $P$ over $\X\times \Y$ such that
    \begin{itemize}
        \item $P$ is {\em robustly realizable}, i.e., $\inf_{f\in \Fc} \Risk(f;P)=0$.
        \item With probability at least $1/3$ over $S\sim P^m$ and randomness of $\Alg$, $\Risk(\Alg(S); P) > 1/5$.
    \end{itemize}
\end{restatable}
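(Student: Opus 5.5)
The plan is an information-theoretic indistinguishability argument. The sample will carry no information about which class in $\mathscr{F}$ is in force, each $\RERM$ call will eliminate at most one candidate class, and the surviving candidates will require incompatible robust predictors.

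\textbf{Construction.} Set $k=d^\star=2^{d+1}-1$. Let $\X$ contain a finite set $Z$ of "perturbation points", together with a pair of labelled anchor points $x_+,x_-$ whose perturbation sets are fixed. The classes are indexed by a hidden parameter $\theta\in[k]$, and each $\Fc_\theta$ is built from the $k$ functions $g_1,\dots,g_k$ of Assouad's tightness example. On $Z$, the dual class of these functions shatters all $2^k$ sign patterns, which gives dual VC dimension exactly $2^{d+1}-1$, while the primal VC dimension is $d$. The functions of $\Fc_\theta$ coincide with the same $g_j$'s on $Z$ but are relabelled by a $\theta$-dependent bijection on the anchor region, so that exactly one function $g_\theta$ is robustly correct on both $(x_+,+1)$ and $(x_-,-1)$. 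The perturbation sets $\Uc(x_\pm)$ are chosen as unions over all indices $j$ of "decoy" regions. Consequently, a predictor $h$ has small robust risk under $P_\theta$ only if $h$ agrees with $g_\theta$ on a set that, by dual shattering, is pinned down only once $\theta$ is known. I expect to need a richer anchor set, say $2k$ anchor points with $P_\theta$ uniform over a $\theta$-independent multiset, so that $S\sim P_\theta^m$ has the same law for every $\theta$. I will verify three properties: $\vc(\Fc_\theta)=d$, dual VC dimension $=k$, and $\inf_{f\in\Fc_\theta}\Risk(f;P_\theta)=0$, the last being witnessed by $g_\theta$.

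\textbf{Oracle simulation.} I will fix one deterministic $\RERM$ oracle per class and show that the answers can be simulated adaptively with a shrinking version space $V\subseteq[k]$, initialised to $V=[k]$. On a query dataset $T$, the oracle returns some $g_j$ minimising the empirical robust loss over $\Fc_\theta$. The key lemma to prove is the following. For every $T$ there is an answer $j(T)$ that is a valid $\RERM$ output simultaneously for all $\theta\in V\setminus\{\theta_T\}$, for at most one exceptional index $\theta_T$. The intended mechanism is that the dual shattering property lets every function other than $g_\theta$ tie on any dataset, so only one index can be distinguished by the argmin. Under the lemma, after at most $k-1$ calls we still have $|V|\ge 2$. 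This is exactly where the threshold $d^\star-1$ enters, and why dual (rather than primal) shattering is the right resource.

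\textbf{Conclusion and main obstacle.} Fix any randomized $\Alg$ and draw $\theta$ uniformly from $[k]$. Conditional on the transcript, $\theta$ is uniform on a set of size at least $2$. Two distinct surviving $\theta,\theta'$ demand predictors disagreeing on perturbation sets carrying $P$-mass at least $1/2$. Hence, for any output $h$, $\Risk(h;P_\theta)+\Risk(h;P_{\theta'})\ge 1/2$, so the expected risk under a uniformly random surviving $\theta$ is at least $1/4$. A Markov-type averaging step converts this into probability at least $1/3$ of $\Risk>1/5$ for some fixed $\theta$; the constants may require adjusting the masses, for instance by making the disagreement mass close to $1$. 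I expect the main obstacle to be the key lemma: designing the class and $\Uc$ so that an \emph{improper} learner genuinely cannot guess $g_\theta$, while every query, including adversarially constructed datasets over all of $\X$, eliminates at most one index. I would first try to attain this by taking $Z$ to be exactly the $2^k$ dual-shattering points, with $\Uc$ of each anchor being a full "half-cube" $\{z_b : b_j=\pm1\}$ intersected with a $\theta$-independent mask. If a single query can rule out many indices at once, the fallback is to weaken the lemma to elimination of $O(1)$ indices per call, which would still give an $\Omega(d^\star)$ bound.
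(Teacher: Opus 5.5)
There is a genuine gap, and it comes before the key lemma: two of your central requirements contradict each other. The robust risk $\Risk(h;P)$ depends only on $h$, $P$ and $\Uc$, not on the class. Suppose $S\sim P_\theta^m$ has the same law for every $\theta$, as with your $\theta$-independent multiset. Then $P_\theta=P$ for all $\theta$. So the zero-risk witness $f_\theta\in\Fc_\theta$ (your relabelled $g_\theta$) satisfies $\Risk(f_\theta;P_{\theta'})=\Risk(f_\theta;P)=0$ for every $\theta'$. This refutes your claimed inequality $\Risk(h;P_\theta)+\Risk(h;P_{\theta'})\geq 1/2$ at $h=f_\theta$. Moreover, a learner that hard-codes $f_\theta$ wins with zero oracle calls; this is allowed because $\Alg$ is quantified after $\mathscr{F}$.

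Letting the $P_\theta$ differ does not rescue a prior supported on only $d^\star$ pairs. Since $|\Risk(h;P)-\Risk(h;P')|\leq\TV(P,P')$, consider a zero-call learner tailored to your family: it picks $\hat\theta$ by a Scheff\'e tournament and outputs the witness for $P_{\hat\theta}$. With probability at least $0.9$ its robust risk is $O(\sqrt{\log(d^\star)/m})$, which is below $1/5$ once $m\gtrsim\log d^\star$, yet the theorem must hold for every $m$.

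Your key lemma also fails on the most natural query. By your design the witness is the unique function robustly correct on $(x_+,+1)$ and $(x_-,-1)$, so one RERM call on $((x_+,+1),(x_-,-1))$ must return it. A valid construction therefore needs two things: a family of instances far too rich to be identified from $m$ samples, and a version space that stays large after the sample, so that RERM never has to hand over a zero-risk predictor.

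This is what the paper builds, and $d^\star$ enters through a different mechanism than one elimination per call. There are $K=\lceil d^\star/(\eta\delta^m)\rceil$ hypotheses, with $\delta=\eta/d^\star$. There are $N\geq m/\eta+q\log(K+1)/(2\eta^2)$ blocks, each a copy of $\Theta=\{(b,J):|J|\leq B\}$ with $B=2^d-1$, under an independent uniformly random bijection $\pi_r$. Each $P_{\pi,t}$ draws random sets $A\not\ni t$. Hence, given the sample, the target is uniform over a version space of size about $K\delta^m\geq d^\star/\eta$, and the oracle returns it with probability below $\eta$ (\prettyref{lem:target-return}).

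Whatever index set $Q$ is returned, $|Q|\leq 2B$ lets one split it into $Q^+$ and $Q\setminus Q^+$, each of size at most $B$. Set $\theta^+=(+1,Q^+)$ and $\theta^-=(-1,Q\setminus Q^+)$. Every returned hypothesis labels these two points identically, while every unreturned one, in particular the target, labels them $+1$ and $-1$. Both points are valid perturbations when $Q\subseteq A$.

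The test block is unseen with probability at least $1-m/N$, and there the random bijection hides which of the two points carries $\theta^+$. \prettyref{lem:hidden-bit} bounds the transcript entropy by $q\log(K+1)$, spread over at least $N-m$ independent blocks. It follows that the adaptive oracle answers leak almost nothing about this orientation. Any output $H$, proper or not, errs in exactly two of the four (label, orientation) cases, which gives expected robust risk above $0.49$. Your plan has no analogue of either the unseen-block hiding or the $Q^\pm$ splitting. Those two ingredients are what make the argument work against improper learners with arbitrarily many samples.
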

We remark that this result proves that $\Omega(d^\star)$ oracle queries to $\RERM$ are necessary {\em regardless} of the number of samples $m$ provided to the algorithm. \prettyref{fig:sample-oracle-transition} illustrates the sample-oracle complexity tradeoff and the sharp transition implied by \prettyref{thm:linearvc} and \prettyref{thm:lowerbound}.

\begin{figure}[t]
    \centering
    \begin{tikzpicture}[
        x=1cm,
        y=0.65cm,
        font=\small,
        >=Latex
        ]
        \def\Ncrit{4.2}
        \def\xmax{9.0}
        \def\ncrit{1.25}
        \def\ytop{4.6}
        \def\yaxis{5.0}

        \fill[red!10] (0,0) rectangle (\Ncrit,\ytop);
        \fill[green!12] (\Ncrit,\ncrit) rectangle (\xmax,\ytop);
        \fill[red!10] (\Ncrit,0) rectangle (\xmax,\ncrit);

        \draw[->] (0,0) -- (9.4,0);
        \draw[->] (0,0) -- (0,\yaxis)
        node[above left,align=center] {sample size \(n\)};

        \node[anchor=west] at (9.55,-0.12)
        {RERM-oracle calls \(N\)};

        \draw[very thick,densely dashed]
        (\Ncrit,0) -- (\Ncrit,\ytop);
        \node[above,align=center] at (\Ncrit,\ytop)
        {\textbf{sharp transition}};
        \node[anchor=north] at (\Ncrit,-0.12)
        {\(N\asymp d^\star\)};

        \draw[densely dashed,green!45!black]
        (\Ncrit,\ncrit) -- (\xmax,\ncrit);
        \draw (-0.08,\ncrit) -- (0.08,\ncrit);
        \node[left] at (-0.08,\ncrit)
        {\(n(\eps,\delta)\)};

        \node[
        align=center,
        text width=3.6cm,
        text=red!65!black
        ] at (2.1,2.9) {
            \textbf{Impossible in general}\\[2pt]
            Some VC classes are not robustly
            learnable with \(N\le d^\star-1\) oracle calls.
        };

        \node[
        align=center,
        text width=4.2cm,
        text=green!35!black
        ] at (6.6,2.8) {
            \textbf{All VC classes learnable}\\[2pt]
            \(N=O\!\bigl(d^\star+\log(1/\delta)\bigr)\)\\[1pt]
            \(n(\eps,\delta)=O\!\left(
            \dfrac{d+\log(1/\delta)}{\varepsilon}
            \right)\)
        };

        \node[
        align=center,
        text=red!65!black,
        font=\scriptsize
        ] at (6.6,0.55) {
            \textbf{Insufficient samples in general}
        };
    \end{tikzpicture}
    \caption{A sharp sample--oracle complexity tradeoff: below \(d^\star\)
        oracle calls, robust learnability can fail regardless of sample size \(n\).}
    \label{fig:sample-oracle-transition}
\end{figure}

\section{Discussion, Implications, and Related Work}

We discuss connections between our work and related literature. We start by discussing several quantitative improvements on prior works that follow from our results. We then situate our work within broader theoretical models of robustness.

\paragraph{Characterization and Optimal Sample Complexity.} \citet*{DBLP:conf/nips/MontasserHS22} proposed a {\em refined} complexity measure denoted ${\rm dim}(\Fc, \Uc)$ that characterizes which classes $\Fc$ and perturbation sets $\Uc$ are adversarially robustly learnable. This characterization is based on a generalization of the classical one-inclusion graph of \citet*{DBLP:journals/iandc/HausslerLW94}. Our result in \prettyref{thm:linearvc}, and more directly \prettyref{lem:leave-one-out-margin}, implies that this complexity measure is bounded by the VC dimension: ${\rm dim}(\Fc, \Uc) \leq O(d)$, positively resolving Conjecture 3 of \citet*[][]{DBLP:conf/nips/MontasserHS22}.

\paragraph{Reductions in Adversarially Robust Learning.} Several works have explored access to different forms of oracles and studied what robust learning guarantees are possible. This includes access to non-robust learners such as $\ERM$ \citep*{DBLP:conf/nips/MontasserHS20,DBLP:conf/aistats/AhmadiBMS24}, and access to ``attack oracles'' for $\Uc$ \citep*{DBLP:conf/icml/MontasserGDS20, DBLP:conf/colt/MontasserHS21}. A unifying theme across these works is that each specific form of access is leveraged to implement an $\RERM$ oracle \eqref{eqn:rerm}. Consequently, our algorithmic result in \prettyref{thm:linearvc} leads to immediate quantitative improvements when combined with those earlier works in their respective settings. For example, \prettyref{thm:linearvc} implies an improved polynomial attack-oracle complexity of $O({\rm poly}(d, d^\star)\cdot {\rm lit}(\Fc))$ in the \textit{perfect attack oracle} framework of \citet*[][Theorem 2]{DBLP:conf/colt/MontasserHS21} over the previously exponential bound $O({\rm exp}(d, d^\star)\cdot {\rm lit}(\Fc))$, where ${\rm lit}(\Fc)$ denotes the Littlestone dimension of $\Fc$ \citep{DBLP:journals/ml/Littlestone87}.

\paragraph{Adversarially Robust Learning with Tolerance.} Several prior works have studied a relaxed model of adversarially robust learning where the learner competes with a ``larger'' perturbation set $\mathcal{V} \supseteq \Uc$, while the adversary is restricted to perturbations in $\Uc$ \citep*{DBLP:conf/alt/AshtianiPU23, DBLP:conf/alt/BhattacharjeeH023, DBLP:conf/colt/AshtianiPU25}. Concretely, consider metric balls in $p$ dimensions $\Uc(x)={\rm Ball}_r(x)$ and $\mathcal{V}(x)={\rm Ball}_{(1+\alpha)r}(x)$. Previously, the best known upper bound on sample complexity was $\widetilde{O}\prn{{d\prn{\log(d)+p\log(1+1/\alpha)}+\log(1/\delta)}/{\eps}}$ \citep*{DBLP:conf/colt/AshtianiPU25}. \prettyref{thm:linearvc} improves the sample complexity to $O(d/\eps)$ removing dependence on the ambient dimension $p$ and $\alpha$.

\paragraph{Bounded Cardinality Perturbation Sets.} In the special case of adversarially robust learning where $\max_{x\in \X}|\Uc(x)|\leq k$ for some fixed $k \in \mathbb{N}$, \citet*{DBLP:journals/jmlr/AttiasKM22} established a sample complexity upper bound of $\tilde{O}(d\log(k)/\eps)$ via a single $\RERM$ call and uniform convergence analysis. \prettyref{thm:linearvc} improves the sample complexity to $O(d/\eps)$ removing the $\log(k)$ factor entirely, albeit with an improper learning algorithm making $O(d^\star)$ calls to $\RERM$.

\paragraph{Broader Models of Robust Learning.}
Our work studies supervised classification under test-time perturbations.
Adversarial robustness has also been studied in semi-supervised and regression settings \citep*{DBLP:conf/nips/CarmonRSDL19,DBLP:conf/nips/AlayracUHFSK19,DBLP:conf/nips/AttiasHM22,DBLP:conf/icml/AttiasH23}.
A complementary line considers adversarial corruptions at {\em training-time}, beginning with the malicious- and nasty-noise models of PAC learning \citep*{DBLP:journals/siamcomp/KearnsL93,DBLP:journals/tcs/BshoutyEK02}, which were recently shown to be equivalent for efficient distribution-independent learning \citep*{DBLP:conf/soda/BlancHMS26}.
Other theoretical works study clean-label and instance-targeted poisoning and robustly reliable prediction under poisoning attacks \citep*{DBLP:conf/colt/BlumHQS21,DBLP:conf/nips/HannekeKMMM22,DBLP:conf/colt/BalcanBHS22}, as well as learning with monotone adversaries \citep*{LarsenPabbarajuShetty26,Mehrotra26}.
Beyond modeling adversarial examples through prescribed perturbation sets, abstention can provide meaningful guarantees under arbitrary test inputs.
In particular, PQ learning combines labeled samples from a training distribution $P$ with unlabeled samples from an arbitrary test distribution $Q$, seeking low error on $Q$ while maintaining a low abstention rate on $P$ \citep*{DBLP:conf/nips/GoldwasserKKM20,DBLP:conf/alt/KalaiK21}. This model is closely connected to reliable learning \citep*{DBLP:journals/jcss/KalaiKM12}, testable learning with distribution shifts \citep*{DBLP:conf/colt/KlivansSV24,DBLP:conf/colt/PatelKSV26}, and has extensions to sequential adversarial injections \citep*{DBLP:conf/nips/GoelHMS23,EdelmanGoel26,DBLP:conf/colt/YuB26}.

\section{Technical Overview}

\subsection{Sample and Oracle Complexity Upper Bound}
We highlight the main ideas behind the analysis of \prettyref{alg:bagging} and proof of \prettyref{thm:linearvc}. Our proof follows a substantially different route from \citet*{DBLP:conf/colt/MontasserHS19}. Instead of relying on sample compression arguments which incur a multiplicative factor of dual VC dimension $d^\star$ in sample complexity, we proceed with a more direct leave-one-out analysis of bagging RERMs.

Fix an arbitrary class $\Fc$ with VC dimension $d$, a perturbation map $\Uc: \X\to 2^\X$, an arbitrary $\RERM_\Fc$ oracle $\widehat{f}$ \eqref{eqn:rerm}, and a distribution $P$ over $\X\times \Y$ that is robustly realizable ($\inf_{f\in \Fc} \Risk(f;P)=0)$. The analysis is divided into three main parts we sketch below.

\paragraph{Distribution over RERMs.} The key idea is to the analyze the distribution over the RERMs $\widehat{f}_S$ that are produced from random samples $S\sim P^n$. In key \prettyref{lem:key}, we essentially show that in expectation over random test examples $(X, Y) \sim P$, for any perturbation $Z\in \Uc(X)$, the fraction of RERMs $\widehat{f}_S$ that misclassify $Z$, $\widehat{f}_S(Z)\neq Y$, is small:
\begin{equation}
    \label{eq:tech-1}
    \Ex{(X,Y)\sim P} \brk{ \sup_{Z \in \Uc(X)} \Ex{S\sim P^n} \brk{\1\set{\widehat{f}_S(Z) \neq Y}}  }=\tilde{O}\prn{\frac{d}{n}}.
\end{equation}
One way of thinking about \eqref{eq:tech-1} is that it swaps the order of $\sup$ and inner expectation relative to the expected robust risk of a single RERM, which does {\em not} vanish to zero in general as implied by \citep*[Theorem 1,][]{DBLP:conf/colt/MontasserHS19},
\[
    \Ex{(X,Y)\sim P}\brk{\Ex{S\sim P^n} \brk{\sup_{Z \in \Uc(X)}  \1\set{\widehat{f}_S(Z) \neq Y}}} = \Omega(1).
\]

\paragraph{Bagging Leave-One-Out Analysis.} The next idea is to apply \eqref{eq:tech-1} on an empirical distribution and establish a leave-one-out bound on the robust risk. Specifically, in \prettyref{lem:leave-one-out-margin}, on a fixed robustly realizable sequence $T=((X_i, Y_i))_{i=1}^{n}$, the aggregate vote over all RERMs can be expressed as
\begin{equation}
    \label{eq:tech-full-bagging}
    \widehat{\B}_T(x) := \Ex{S\sim {\rm Unif}(T)^n} \brk{\widehat{f}_{S}(x)} \in [-1, 1].
\end{equation}
\eqref{eq:tech-full-bagging} can be thought of as performing bagging (bootstrap aggregation) over all ordered bootstraps. This idealized full-bagging predictor is analogous to a voting predictor that is analyzed by \citet*{DBLP:conf/colt/Larsen23} for classical PAC learning, though our analysis follows a fundamentally different route tailored to the robust risk relying on \eqref{eq:tech-1} and the leave-one-out analysis we sketch below.

Consider a uniform distribution $D$ over $T$ and $D_{-i}$ a uniform distribution on $T_{-i}$ (leaving out the $i^{\rm th}$ tuple) for each $i \in [n]$. Note that the probability the $i^{\rm th}$ tuple is left out when drawing $n-1$ samples from $D$ is equal to $(1-1/n)^{n-1}\geq 1/e$. Combining this with \eqref{eq:tech-1}, it follows that
\[
    \frac{1}{n}\sum_{i=1}^{n} \sup_{Z_i\in \Uc(X_i)} \Pr_{S\sim D_{-i}^{n-1}} \brk{\widehat{f}_S(Z_i) \neq Y_i} \leq e \cdot \frac{1}{n}\sum_{i=1}^{n} \sup_{Z_i\in \Uc(X_i)} \Pr_{S\sim D^{n-1}} \brk{\widehat{f}_S(Z_i) \neq Y_i} =\tilde{O}\prn{\frac{d}{n-1}}.
\]
By applying Markov's inequality, we establish the following leave-one-out robust error bound on the full bagging predictor \eqref{eq:tech-full-bagging}
\begin{equation}
    \label{eq:tech-leave-one-out}
    \frac{1}{n} \sum_{i=1}^{n} \sup_{Z_i \in \Uc(X_i)} \1\set{Y_i\widehat{\B}_{T_{-i}}(Z_i) \leq 0} \leq 2\cdot \frac{1}{n}\sum_{i=1}^{n} \sup_{Z_i\in \Uc(X_i)} \Pr_{S\sim D_{-i}^{n-1}} \brk{\widehat{f}_S(Z_i) \neq Y_i} = \tilde{O}\prn{\frac{d}{n-1}}.
\end{equation}
We prove a more general version of this bound that provides an additional margin guarantee in \prettyref{lem:leave-one-out-margin}, which will be useful for us later in a sparsification argument. Observe that \eqref{eq:tech-leave-one-out}, via an exchangeability argument, implies the following bound on the expected robust risk of \eqref{eq:tech-full-bagging},
\[
    \Ex{S\sim P^n} \brk{\Risk\prn{{\rm sign}\prn{\widehat{B}_S}; P}} = \tilde{O}\prn{\frac{d}{n}}.
\]

\paragraph{High-Probability Bound and Sparsification.} We next apply a suffix averaging technique due to \citet*{DBLP:conf/focs/Aden-AliCSZ23} to extend the bound in \eqref{eq:tech-leave-one-out} to a high probability bound. With the help an additional technical lemma (\prettyref{lem:margin-average}), we establish in \prettyref{lem:full-agg} a high-probability bound on the robust risk of the {\em suffix-bagging} predictor $\widehat{g}_S: \X \to [-1,1]$ defined as
\begin{equation}
    \label{eq:tech-suffix-agg}
    \widehat{g}_S(x) := \frac{4}{3n} \sum_{t=n/4}^{n-1} \widehat{B}_{S_{\leq t}}(x) = \Ex{t\sim\Unif(J_n)}\brk{\Ex{S'\sim \Unif(S_{\leq t})^t} \brk{\widehat{f}_{S'}}}.
\end{equation}
Observe that the suffix-bagging predictor \eqref{eq:tech-suffix-agg} computes the aggregate vote over a distribution over RERMs $\widehat{f}_{S'}$ where $S'$ is sampled according to the process defined in \eqref{eq:tech-suffix-agg}.

Finally, note that the suffix-bagging predictor \eqref{eq:tech-suffix-agg} is {\em intractable} to compute. To overcome this challenge, we observe that we can approximate \eqref{eq:tech-suffix-agg} via a sparsification technique (\prettyref{lem:sparsification}) due to \citet*{DBLP:journals/jacm/MoranY16} by drawing {\em enough} bootstrap samples $S'_1, \dots, S'_N$ according to the process defined in \eqref{eq:tech-suffix-agg}. \prettyref{lem:sparsification} appeals to uniform convergence over the dual class of $\Fc$ to show that $N=O(d^\star)$ bootstrap samples suffice for approximation, where $d^\star$ denotes the dual VC dimension. Observe the bagging algorithm performs exactly this sampling process of bootstraps to produce its majority-vote output. This is where the oracle complexity dependence on $d^\star$ shows up.

\subsection{Oracle Complexity Lower Bound}

\paragraph{Proof Overview.}
The proof constructs a finite hard family $\{(\Fc_{\pi},P_{\pi,t})\}_{\pi,t}$ and places a prior on it by choosing the target index $T\in[K]$ uniformly and independently permuting $N$ copies of a latent instance set (which defines the randomized function class $\Fc_\pi$).  The lower bound rests on two distinct forms of hidden information.  First, even after observing an arbitrarily large training sample and making fewer than $d^\star$ oracle calls, the learner is unlikely to have the zero-robust-risk target among the hypotheses returned by the oracle.  Second, the returned hypotheses reveal too little information to determine the orientation of a carefully chosen opposite-label pair on a uniformly random unseen block.  The second obstruction applies even to an improper learner that combines and evaluates all returned hypotheses arbitrarily.

\paragraph{The Function Class and the Role of \(\pi\).} For suitably chosen integer parameters $K, N$, set \(B=2^d-1\), so \(d^\star=2B+1\), and consider the latent instance set
\[
    \Theta=\{(b,J):b\in\{-1,+1\},\ J\subseteq[K],\ |J|\le B\}.
\]
The (randomized) class $\Fc$ will consist of $K$ hypotheses. Hypothesis \(f_i\) labels \((b,J)\) by \(-b\) if \(i\in J\), and by \(b\) otherwise. The instance space $\X$ will consist of $N$ disjoint blocks, where each block \(Z_r\) is merely a copy of $\Theta$: a bijection \(\pi_r:Z_r\to\Theta\) determines the behavior of $\Fc$ on block $Z_r$. The permutations \(\pi_1,\ldots,\pi_N\) are chosen independently and uniformly, so the class behaves identically on every block, up to an unknown permutation of the rows.  Thus all combinatorial properties are determined by the behavior on \(\Theta\), while \(\pi\) hides where the relevant rows occur. In \prettyref{lem:dimensions}, we show that the (randomized) function class $\Fc$ has VC dimension $d$ and dual VC dimension $d^\star=2^{d+1}-1$.

\paragraph{Robust Realizability and the Hidden Target.}
A distribution $P_{\pi,t}$ draws a block $R \in [N]$, a label $Y$ uniformly, and a random set $A\subseteq[K]\setminus\{t\}$, and places in the perturbation set all types $(Y,J)$ with $J\subseteq A$.  The resulting robust loss has the exact form
\[
    \sup_{z\in\Uc(X)}\1\{f_i^\pi(z)\neq Y\}=\1\{i\in A\}.
\]
Since $t$ is always excluded from $A$, $f_t^\pi$ has zero robust risk. In \prettyref{lem:target-return}, we argue that after observing the training sample, every index absent from all sampled sets \(A_j\) remains a possible target, and conditional on the training sample the true target is uniform over this version space.  The class size \(K\) is chosen so that the version space remains large even when the sample size \(m\) is arbitrary.  Since \(q=d^\star-1\) oracle calls can return at most \(q\) distinct hypotheses, the probability that the returned set contains the target is less than a small constant \(\eta=10^{-3}\).

\paragraph{The Dual-VC Obstruction.}
Let \(Q\) be the set of indices of hypotheses returned by the $\RERM$ oracle.  Since \(|Q|\le q=2B\), split \(Q\) into two parts \(Q^+\) and \(Q\setminus Q^+\), each of size at most \(B\), and define
\[
    \theta^+=(+1,Q^+),
    \qquad
    \theta^-=(-1,Q\setminus Q^+).
\]
Both are valid instances \(\theta^+, \theta^- \in \Theta\). By construction of $\Fc$, one can verify that all returned hypotheses indexed by $Q$ predict the same label on \(\theta^+\) and \(\theta^-\) \eqref{eq:returned-agree}, whereas every unreturned hypothesis with index in $[K]\setminus Q$ labels \(\theta^+\) by \(+1\) and \(\theta^-\) by \(-1\) \eqref{eq:unreturned-opposite}.  This is the decisive role of the dual VC dimension: fewer than \(d^\star=2B+1\) returned hypotheses necessarily leaves an opposite-label pair that those hypotheses cannot distinguish. On each unseen block during training, the random permutation \(\pi_r\) hides which of two instances in $Z_r$ is assigned to \(\theta^+\) and which is assigned to \(\theta^-\); this unknown assignment is the orientation bit.

\paragraph{The Hidden Orientation is Nearly Uniform.}
The preceding local symmetry is not sufficient by itself. The response of the $\RERM$ oracle is adaptive and may depend on all hidden block permutations, including through arbitrary tie-breaking, so the identities of the returned hypotheses can leak information about the orientations.  After conditioning on the target, the learner's randomness, the training sample, and all sampled-block permutations, the unseen permutations remain independent.  On each unseen block, reveal every returned hypothesis and the unordered pair of instance assigned to \(\theta^+\) and \(\theta^-\).  Swapping the two assignments preserves all revealed information and flips only their orientation, so the orientation is locally uniform. \prettyref{lem:hidden-bit} then implies that the posterior orientation bias \(\Delta_R\) on a uniformly selected unseen test block satisfies
\[
    \E\!\left[\mathbf 1\{R \text{ is unseen in training}\}\Delta_R\right]<\eta.
\]
Thus, despite the entire adaptive interaction, the relevant orientation on a fresh block remains nearly uniform.

\paragraph{From Hidden Orientation to Robust Error.}
For the analysis, we condition on the training sample (and the revealed sampled-block permutations), the learner's internal randomness, the entire sequence of $\RERM$ oracle responses and on the values of every returned hypothesis on the whole domain. This information fixes the, possibly improper, classifier \(H\) outputted by the learner.  The relevant orientation remains hidden because the two special points are indistinguishable to all returned hypotheses.  On the {\em good} event that: the test block is unseen, the target was not returned, and \(Q\subseteq A\) \eqref{eq:good-event}, the point carrying \(\theta^+\) is a valid perturbation when \(Y=+1\), while the point carrying \(\theta^-\) is valid when \(Y=-1\).  For any fixed values of \(H\) at the two points, a uniform label and a uniform orientation produce an error in exactly two of the four possible cases, and hence with probability \(1/2\) \eqref{eq:fair-orientation-half}. The posterior bias can reduce this probability by at most \(\Delta_R\) \eqref{eq:posterior-error}.  The good event has probability close to one \eqref{eqn:eventlowerbound}, so averaging gives expected robust risk greater than \(0.49\).  Finally, averaging over the prior fixes a single hard pair \((\Fc_\pi,P_{\pi,t})\), and boundedness of the risk yields the stated constant-probability lower bound.

\section{Proof of \texorpdfstring{\prettyref{thm:linearvc}}{Theorem \getrefnumber{thm:linearvc}}: Sample and Oracle Complexity Upper Bound}

In this section, we provide a complete proof of \prettyref{thm:linearvc} which we restate below.

\realizablethm*

\subsection{Leave-One-Out Analysis of Bagging}

We begin with proving the following key lemma which establishes an important property: the fraction of RERMs that misclassify a worst-case perturbation of a random example is small. 

\begin{lemma}
    \label{lem:key}
    For any class $\Fc$ with VC dimension $d$, any $\Uc$, any deterministic $\RERM$ oracle $\widehat{f}$,
    any distribution $P$ over $\X \times \Y$ such that $\inf_{f\in \Fc} \Risk(f; P) = 0$, and any $\theta \in (0,1)$,
    \[
        \Pr_{(X,Y)\sim P} \brk{ a(X, Y) \geq \theta } \leq \frac{c}{\theta^2} \cdot \frac{d}{n}, \qquad a(x,y):= \sup_{z\in \Uc(x)}~~\Pr_{S\sim P^n} \brk{\widehat{f}_S(z) \neq y}.
    \]
\end{lemma}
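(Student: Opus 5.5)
The plan is to reduce the robust statement to a classical (non-robust) realizable PAC statement about \emph{consistent} hypotheses, by moving each bad test point to its worst-case perturbation. Let $E=\set{(x,y): a(x,y)\geq\theta}$ and $p=P(E)$. We may assume $pn\geq C d$ for a large constant $C$; otherwise $p< Cd/n\leq \frac{c}{\theta^2}\cdot\frac{d}{n}$ and there is nothing to prove.

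\textbf{Step 1 (witness perturbations).} For each $(x,y)\in E$ choose a witness $z(x,y)\in\Uc(x)$ with
\[
\Pr_{S\sim P^n}\brk{\widehat f_S(z(x,y))\neq y}\geq \theta/2 .
\]
Such a witness exists by the definition of the supremum; the factor $1/2$ gives slack, and I would ignore measurability issues or handle them by a standard selection argument. Let $Q$ be the distribution of $(z(X,Y),Y)$ for $(X,Y)\sim P\mid E$. This is a distribution on $\X\times\Y$, and it is realizable by $\Fc$: a robustly correct $f^\star$ exists, or is approached arbitrarily well, since $\inf_f\Risk(f;P)=0$.

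\textbf{Step 2 (RERM outputs are consistent with the witnesses).} Because $P$ is robustly realizable, $\widehat f_S$ has zero empirical robust risk on $S$ almost surely. Hence $\widehat f_S(z(X_i,Y_i))=Y_i$ for every training point $(X_i,Y_i)\in S\cap E$. Conditionally on the number $k$ of sample points falling in $E$, these points are i.i.d.\ from $P\mid E$, so their images are an i.i.d.\ sample of size $k$ from $Q$. Thus $\widehat f_S$ is a hypothesis in $\Fc$ consistent with $k$ i.i.d.\ draws from $Q$. The remaining points of $S$ only restrict which consistent hypothesis is chosen, and they are independent of these $k$ draws given $k$.

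\textbf{Step 3 (lower bound on error, via Fubini and reverse Markov).} By Fubini,
\[
\E_S\brk{\mathrm{err}_Q(\widehat f_S)}=\E_{(X,Y)\sim P\mid E}\brk{\Pr_S\brk{\widehat f_S(z(X,Y))\neq Y}}\geq \theta/2 .
\]
Since the error is bounded by $1$, $\Pr_S\brk{\mathrm{err}_Q(\widehat f_S)\geq\theta/4}\geq\theta/4$.

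\textbf{Step 4 (upper bound via uniform convergence).} By a Chernoff bound and $pn\geq Cd$, we have $k\geq pn/2$ except with probability $e^{-pn/8}$. The classical realizable bound of \citet{DBLP:journals/jacm/BlumerEHW89} for all consistent hypotheses gives, with probability at least $1-\theta/8$,
\[
\mathrm{err}_Q(\widehat f_S)\leq \frac{c_0\prn{d\log(k/d)+\log(8/\theta)}}{k}.
\]
Combining with Step 3, for a constant $C$ large enough that $e^{-pn/8}<\theta/8$ (or else the bound is trivial), the event $\mathrm{err}_Q(\widehat f_S)\geq\theta/4$ must intersect the good event. With $M=pn/d$ this yields
\[
\frac{\theta}{4}\lesssim \frac{\log M+\log(1/\theta)}{M}.
\]
Solving gives $M\lesssim \frac{1}{\theta}\log\frac1\theta\leq \frac{c'}{\theta^2}$, that is, $p\leq \frac{c}{\theta^2}\cdot\frac dn$. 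The point is that the $\log n$ factor disappears, because uniform convergence is applied only to the $\approx pn$ points in $E$, and the logarithm becomes $\log(pn/d)$, which is controlled by $\theta$.

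\textbf{Main obstacle.} The delicate point is Step 4: the error bound must hold for the specific, adaptively chosen, RERM output, whose choice depends also on the points outside $E$. This is why I use the uniform-over-consistent-hypotheses bound rather than an expectation bound for a particular learner, and why the bound is applied conditionally on $k$. I also expect some care in making the ``bound is trivial otherwise'' case analysis rigorous. That case analysis covers small $pn$ and the requirement $e^{-pn/8}<\theta/8$, which may need $pn\gtrsim\log(1/\theta)$. This is again absorbed by the $1/\theta^2$ factor.
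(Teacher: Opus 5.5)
Your proof is correct, but the generalization tool differs from the paper's. The shared core idea is the same: you fix, independently of $S$, a perturbation selector. Yours is the witness map $z(x,y)$ and the paper's is $\phi_\eta$. Under this selector, robust correctness of $\widehat f_S$ on $S$ becomes ordinary consistency on the transformed sample, and the transformed error class still has VC dimension at most $d$.

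From there the two arguments split. The paper invokes the recent second-moment bound for consistent rules (Lemma~\ref{lem:vc-moment-2}) to get $\E[a(X,Y)^2]\le c\,d/n$, then applies Markov's inequality to $a^2$. You instead localize. You condition on the bad set $E$ and lower-bound $\mathrm{err}_Q(\widehat f_S)$ by reverse Markov. You then apply the classical uniform bound of Blumer et al.\ for consistent hypotheses to the $k\approx pn$ sample points that land in $E$. This forces $pn/d\lesssim \theta^{-1}\log(e/\theta)$.

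Your route has three advantages:
\begin{itemize}
\item It is self-contained with classical tools.
\item It shows that the extra $\log(n/d)$ factor the paper attributes to classical bounds can be avoided: after localization the logarithm is only $\log(pn/d)$.
\item It gives a slightly sharper dependence, $O\prn{\frac{d}{\theta n}\log\frac{e}{\theta}}$ instead of $O\prn{\frac{d}{\theta^2 n}}$. This is immaterial downstream, where $\theta$ is a constant.
\end{itemize}
Also, because your witness is indexed by $(x,y)$, you never substitute a proxy label for $y$. The paper's step ``$f^\star(X)=Y$ with probability $1$'' implicitly assumes $X\in\Uc(X)$ and that the infimum of the robust risk is attained. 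Both are easily repaired, but your argument needs neither.

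The paper's route has its own advantages. It is shorter and modular. It gives a single moment inequality that yields the tail bound for every $\theta$ at once, with no case analysis.

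The loose ends you flagged are harmless, but note two of them explicitly. First, the small-$pn$ cases use $d\ge 1$. The case $d=0$ is trivial, since then $\Fc$ is a single function, which is robustly correct almost surely. Second, the condition $e^{-pn/8}<\theta/8$ does not follow from enlarging $C$. It must be a separate case $pn\le 8\log(8/\theta)$, which is absorbed by the $1/\theta^2$ factor.
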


To prove \prettyref{lem:key}, we make use of the following recent second-moment bound for consistent VC rules. We note that we could also rely on classical first-moment bounds \citep*{DBLP:journals/jacm/BlumerEHW89}, at the expense of an additional $\log(n/d)$ factor in the resulting bound.

\begin{lemma} [\cite*{DBLP:conf/colt/Aden-AliHLZ24,DBLP:journals/corr/abs-2606-13614}]
    \label{lem:vc-moment-2}
    Let $P$ be a distribution on $\X \times \Y$ and let $\Cc$ be a class of subsets of $\X \times \Y$ with VC dimension $d$.
    Let $C$ be any deterministic selector satisfying $C(S)\in \Cc$ for every $S\in (\X \times \Y)^n$ and $\Pr_{S\sim P^n}\brk{C(S)\cap S=\emptyset}=1$. Then,
    \[
        \underset{(X,Y) \sim P}{\E}\brk{ \Pr_{S\sim P^n} \brk{(X,Y)\in C(S)}^2 } \leq c \frac{d}{n}.
    \]
\end{lemma}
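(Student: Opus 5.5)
The plan is to rewrite the second moment as a probability over two independent training samples, and then run a leave-one-out/exchangeability argument on the combined sample. By Fubini, with $S,S'\sim P^n$ independent and $(X,Y)\sim P$ independent of both,
\[
    \Ex{(X,Y)\sim P}\brk{\Pr_{S\sim P^n}\brk{(X,Y)\in C(S)}^2}=\Pr_{S,S',(X,Y)}\brk{(X,Y)\in C(S)\cap C(S')}.
\]
Almost surely $C(S)\cap S=\emptyset$ and $C(S')\cap S'=\emptyset$. Hence the set $C(S)\cap C(S')$ avoids all $2n$ points of $S\cup S'$. It also lies in the class $\Cc\sqcap\Cc=\set{c_1\cap c_2}$, whose VC dimension is $O(d)$. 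So the task reduces to bounding the probability that a fresh point falls in a set that is consistent with $2n$ samples. The challenge is to get $d/n$ rather than the $d\log(n/d)/n$ that a generic consistent rule on $2n$ points would give.

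\textbf{Step 1 (exchangeability).} Draw $2n+1$ i.i.d.\ points $W_1,\dots,W_{2n+1}$, a uniformly random test index $k$, and a uniformly random ordered split of the remaining $2n$ points into $(S,S')$. By exchangeability the target probability equals
\[
    \E\brk{\frac{1}{2n+1}\sum_{k}\1\set{W_k\in C(S^{(k)})\cap C(S'^{(k)})}},
\]
where the split is averaged over. It then suffices to show, for every fixed multiset of $2n+1$ points, that the expected number of indices $k$ captured in this sense is $O(d)$.

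\textbf{Step 2 (projection and capacity).} I would project $\Cc$ onto the $2n+1$ points and use the Sauer--Shelah/one-inclusion type density bound: any finite family of VC dimension $d$ has at most $d$ times as many ``edges'' (pairs of patterns differing in one point) as vertices. A point $W_k$ is captured only if the selector, run on a sample that excludes $W_k$, outputs a pattern containing $W_k$, and it must do so for \emph{both} independent halves. The independence of the two halves is what should kill the logarithm. For $W_k$ to be captured, $C$ applied to each half must ``spend'' its freedom on $W_k$. A counting/charging argument would then charge each captured $k$ to a distinct edge of the one-inclusion graph, oriented by the selector, in the style of \citet*{DBLP:journals/iandc/HausslerLW94}. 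This should yield $O(d)$ captured indices in expectation over the split.

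\textbf{Fallback.} If the charging does not go through cleanly, I would use a peeling argument instead. Condition on $S$ and set $A=C(S)$ with $p=P(A)$. Then $C(S')\cap A$ is a consistent rule for the conditional distribution on $A$, which sees about $np$ samples. The standard tail bound gives conditional mass $\tilde O\prn{d/(np)}$, hence absolute mass $\tilde O(d/n)$. I would integrate this against the tail of $p$ and try to make the two logarithmic factors cancel.

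\textbf{Main obstacle.} Removing the $\log(n/d)$ factor is the main obstacle. Both routes naturally give $d\log(n/d)/n$, since that rate is tight for a single consistent rule. The key difficulty is to show genuinely that requiring two independent consistent outputs to capture the same point behaves like an optimal (one-inclusion) learner.
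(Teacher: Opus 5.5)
There is a genuine gap. Neither of your routes removes the logarithm, and removing it is the entire content of the lemma. The paper does not prove this statement itself; it imports it from the cited works.

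\textbf{Main route.} Step 1 is a correct identity, but after you condition on the multiset, $S'$ is determined by $S$ and $W_k$: it is the complement of $S\cup\set{W_k}$. So the reduced combinatorial statement no longer has the independence of the two halves that you say should ``kill the logarithm.'' Step 2 is not an argument. The one-inclusion density bound controls a particular orientation, whereas $C$ is an arbitrary consistent selector. Moreover, $C(S)$ is constrained only on $S$, so a capture of $W_k$ does not single out an edge of the one-inclusion graph. The sentence ``this should yield $O(d)$'' is exactly the open difficulty.

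\textbf{Fallback.} Conditioning on $A=C(S)$ and applying the uniform $\eps$-net bound to $P(\cdot\mid A)$ uses only $p=P(A)$. It gives $\E_{S'}[P(C(S')\cap A)]\lesssim\min\{p,\frac dn\log\frac{enp}{d}\}$. For the selectors witnessing the logarithmic lower bound for a single ERM, $p\asymp\frac dn\log\frac nd$ with constant probability. At that value your bound is of order $\frac dn\log\log\frac nd$, so the route proves only that. Integrating against the law of $p$ cannot help, because the argument never uses that $C(S')$ comes from the same selector as $C(S)$.

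\textbf{Missing idea.} Peel on level sets of the pointwise capture probability $a(w)=\Pr_S[w\in C(S)]$; this is what ties the two copies together. Let $R_i=\set{w:2^{-i}<a(w)\le 2^{1-i}}$ and $\mu_i=P(R_i)$, so that $\E[a(W)^2]\le\sum_{i\ge1}4^{1-i}\mu_i$.

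By definition of $R_i$, $\E_S[P(C(S)\cap R_i)]=\E[a(W)\1\set{W\in R_i}]\ge 2^{-i}\mu_i$. On the other hand, $C(S)\cap R_i$ is a trace of $\Cc$ that avoids the $\mathrm{Bin}(n,\mu_i)$ sample points falling in $R_i$, and these points are i.i.d.\ from $P(\cdot\mid R_i)$. The classical $\eps$-net bound plus a Chernoff bound therefore give, whenever $n\mu_i\ge 2d$,
\[
2^{-i}\mu_i\le\E_S\brk{P(C(S)\cap R_i)}\le c\,\frac dn\log\frac{en\mu_i}{d}+\mu_i e^{-n\mu_i/8}.
\]
With $x_i=n\mu_i/d$ this reads $2^{-i}\le c\log(ex_i)/x_i+e^{-dx_i/8}$, which forces $x_i=O(i2^i)$. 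Hence
\[
\E[a(W)^2]\le\frac dn\sum_{i\ge1}O(i2^{-i})=O(d/n).
\]
At level $i$ the logarithm is only $\log(n\mu_i/d)=O(i)$, and the weight $4^{-i}$ absorbs it. This uses only the classical first-moment bound; no one-inclusion graph or exchangeability argument is needed.
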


\begin{proof} [Proof of \prettyref{lem:key}]
    By Markov's inequality, observe that
    \[
        \Pr_{(X,Y)\sim P} \brk{ a(X, Y) \geq \theta } = \Pr_{(X,Y)\sim P} \brk{ a(X,Y)^2 \geq \theta^2 }
        \leq \frac{1}{\theta^2} \Ex{(X,Y)\sim P}\brk{a(X,Y)^2}.
    \]
    Thus, it suffices to show that
    \[
        \Ex{(X,Y)\sim P}\brk{a(X,Y)^2} = \Ex{(X,Y)\sim P} \brk{ \sup_{Z \in \Uc(X)} \Pr_{S\sim P^n} \brk{\widehat{f}_S(Z) \neq Y}^2  } \leq c \frac{d}{n}.
    \]
    We now proceed with analyzing the random variable $a(X,Y)$.
    Fix an arbitrary map $\phi: \X \to \X$ such that $\phi(x) \in \Uc(x)$ for all $x \in \X$.
    Define the collection of error sets
    \[
        \Cc_\phi = \set{\set{(x,y)\in \X \times \Y: f(\phi(x)) \neq y } \mid f \in \Fc}.
    \]
    Observe that $\vc(\Cc_\phi) \leq \vc(\Fc) = d$, because we are using a fixed map $\phi$.
    Consider the selector $C$ defined as follows
    \[
        C(S) = \set{(x,y)\in \X \times \Y: \widehat{f}_S(\phi(x)) \neq y}.
    \]
    Because $\widehat{f}$ is an $\RERM$ oracle for $\Fc$ and distribution $P$ is robustly realizable: $\inf_{f\in \Fc} \Risk(f; P)=0$,
    it follows that on any sample $S\sim P^n$, $\widehat{f}_S$ is robustly correct on $S$, i.e., for each $(X,Y)\in S$ and each $Z\in \Uc(X)$, $\widehat{f}_S(Z) = Y$. Since $\phi(x)\in \Uc(x)$, this implies that $\widehat{f}_S(\phi(X))= Y$ for each $(X,Y)\in S$. Therefore, $\Pr_{S\sim P^n}\brk{C(S)\cap S = \emptyset} = 1$. Also, since $\widehat{f}_S\in \Fc$, it follows that $C(S) \in \Cc_\phi$.

    We are now ready to invoke \prettyref{lem:vc-moment-2} to establish
    \begin{equation}
        \label{eqn:second-moment}
        \Ex{(X,Y)\sim P}\brk{\Pr_{S\sim P^n}\brk{\widehat{f}_S(\phi(X)) \neq Y}^2}
        = \Ex{(X,Y)\sim P}\brk{ \Pr_{S\sim P^n} \brk{(X,Y)\in C(S)}^2 } \leq c \frac{d}{n}.
    \end{equation}
    Observe that \eqref{eqn:second-moment} holds for any map $\phi:\X\to \X$ satisfying $\phi(x)\in \Uc(x)$ for all $x\in \X$. Fix an arbitrary $f^\star\in \Fc$ such that $\Risk(f^\star; P)=0$. For any scalar $\eta > 0$, we will consider a special arbitrary map $\phi_\eta: \X \to \X$ satisfying
    \[
        \phi_\eta(x) \in \Uc(x) \qquad \text{and}
        \qquad \Pr_{S\sim P^n} \brk{\widehat{f}_S(\phi_\eta(x)) \neq f^\star(x)}^2 \geq a\prn{x, f^\star(x)}^2 - \eta
        \qquad\text{for all }x\in\X.
    \]
    In words, $\phi_\eta$ maps each $x \in \X$ to a perturbation $\phi_\eta(x)=z\in \Uc(x)$ that
    is an approximate maximizer of the probability of disagreement $\Pr_{S\sim P^n}\brk{\widehat{f}_S(z)\neq f^\star(x)}$. Since $f^\star(X) =Y$ with probability $1$, it follows then by \eqref{eqn:second-moment} that
    \[
        \Ex{(X,Y)\sim P}\brk{a(X,Y)^2} \leq \Ex{(X,Y)\sim P}\brk{\Pr_{S\sim P^n}\brk{\widehat{f}_S(\phi_\eta(X)) \neq Y}^2} + \eta \leq c \frac{d}{n} + \eta.
    \]
    Taking $\eta$ arbitrarily close to zero concludes the proof.
\end{proof}

\paragraph{Bagging.} We introduce notation for bagging. For a sequence $S=((X_1, Y_1), \dots, (X_n, Y_n))$, let $S' \sim {\rm Unif}(S)^n$ be a bootstrap drawn uniformly at random (with replacement) from $S$ and let $\widehat{f}_{S'}$ be the output classifier of $\RERM$ \eqref{eqn:rerm} on $S'$. Equivalently, we can express this as drawing a sequence of integers $I=(i_1,\dots, i_n) \in [n]^n$ and denote the bootstrap sample by $S(I)=((X_{i_1}, Y_{i_1}), \dots, (X_{i_n}, Y_{i_n}))$.

Define the aggregate vote $\widehat{\B}_S: \X \to [-1,1]$ over all $n^n$ ordered bootstraps of $S$ as

\begin{equation}
    \label{eqn:full-bagging}
    \widehat{\B}_S(x) := \Ex{S'\sim {\rm Unif}(S)^n} \brk{\widehat{f}_{S'}(x)} = \frac{1}{n^n} \sum_{I \in [n]^n} \widehat{f}_{S(I)}(x).
\end{equation}

We now proceed to bound the leave-one-out robust risk of the bagging predictor \eqref{eqn:full-bagging}.

\begin{lemma}[Leave-One-Out Error]
    \label{lem:leave-one-out-margin}
    There is a universal constant $C> 0$ such that, for any class $\Fc$ with VC dimension $d$, any $\Uc$, any deterministic $\RERM$ oracle,
    any robustly realizable sequence $T = ((X_1, Y_1), \dots, (X_n, Y_n))$ with $n\geq 2$, and any $\gamma \in [0, 1)$,
    \[
        \frac{1}{n} \sum_{i=1}^{n} \sup_{Z_i \in \Uc(X_i)} \1\set{Y_i\cdot \widehat{\B}_{T_{-i}}(Z_i) \leq \gamma} \leq\frac{C}{(1-\gamma)^2}\cdot \frac{d}{n}.
    \]
\end{lemma}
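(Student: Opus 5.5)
The plan is to apply \prettyref{lem:key} with $P=D:=\Unif(T)$ and sample size $n-1$, and then pass from $D$ to the leave-one-out distributions $D_{-i}$ by a coupling argument. Since $T$ is robustly realizable, $\inf_{f\in\Fc}\Risk(f;D)=0$, so the lemma applies. Rather than the tail form, I will use its proof's intermediate second-moment bound:
\[
\frac1n\sum_{i=1}^n \sup_{Z\in\Uc(X_i)} \Pr_{S\sim D^{n-1}}\brk{\widehat f_S(Z)\neq Y_i}^2 \le c\,\frac{d}{n-1}.
\]

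First, I relate each term to the leave-one-out quantity. Conditioned on the event that index $i$ is never drawn, a sample $S\sim D^{n-1}$ has exactly the law $D_{-i}^{n-1}$. This event has probability $(1-1/n)^{n-1}\ge 1/e$. Hence for every $Z$,
\[
\Pr_{S\sim D_{-i}^{n-1}}\brk{\widehat f_S(Z)\neq Y_i}\le e\,\Pr_{S\sim D^{n-1}}\brk{\widehat f_S(Z)\neq Y_i}.
\]
I also need to match the normalization. The predictor $\widehat\B_{T_{-i}}$ averages over $\Unif(T_{-i})^{n-1}$, which is exactly $D_{-i}^{n-1}$. Its output satisfies
\[
Y_i\widehat\B_{T_{-i}}(Z)=1-2p_i(Z),\qquad p_i(Z):=\Pr_{S\sim D_{-i}^{n-1}}\brk{\widehat f_S(Z)\ne Y_i}.
\]
The RERM is deterministic, so this identity is exact.

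Next comes the margin step. The condition $Y_i\widehat\B_{T_{-i}}(Z)\le\gamma$ is equivalent to $p_i(Z)\ge(1-\gamma)/2$. By Markov's inequality on the squared quantity,
\[
\1\set{p_i(Z)\ge(1-\gamma)/2}\le \frac{4p_i(Z)^2}{(1-\gamma)^2}\le\frac{4e^2}{(1-\gamma)^2}\Pr_{S\sim D^{n-1}}\brk{\widehat f_S(Z)\neq Y_i}^2.
\]
I take the sup over $Z\in\Uc(X_i)$ on both sides; the right side is monotone, so the sup passes through. Averaging over $i$ and applying the second-moment bound gives at most $\frac{4e^2c}{(1-\gamma)^2}\cdot\frac{d}{n-1}$. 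Since $n\ge2$, this is at most $\frac{8e^2c}{(1-\gamma)^2}\cdot\frac dn$.

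The main obstacle is justifying the second-moment bound with the sup inside the expectation for the finite-support distribution $D$. This is not literally the statement of \prettyref{lem:key}; it is the intermediate inequality $\E[a^2]\le cd/n$ in that lemma's proof. I will re-derive it. For each $i$, choose $\phi(X_i)$ as an approximate maximizer of the disagreement probability. Then apply \prettyref{lem:vc-moment-2} to the selector $C(S)=\{(x,y):\widehat f_S(\phi(x))\neq y\}$ under $D$ with $n-1$ draws, and let the approximation slack go to zero.

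There is a subtlety when repeated points $X_i=X_j$ carry the same label. Then $\phi$ must be a single function of $x$, and this is fine because the maximizer depends only on $(x,y)$, and $y$ is determined by $x$ under realizability. The RERM property gives $C(S)\cap S=\emptyset$ almost surely, as in the lemma's proof. Alternatively, I could use the tail form of \prettyref{lem:key} directly with $\theta=(1-\gamma)/(2e)$. This yields the same bound up to constants, with no need to revisit the proof, and is perhaps cleaner.
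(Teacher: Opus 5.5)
Your proposal is correct and follows essentially the same route as the paper: the bound $(1-1/n)^{n-1}\ge 1/e$ relating $D_{-i}^{n-1}$ to $D^{n-1}$, the identity $Y_i\widehat{\B}_{T_{-i}}(Z)=1-2p_i(Z)$, and then \prettyref{lem:key} applied to $D=\Unif(T)$ with $n-1$ samples. The paper uses the tail form of \prettyref{lem:key} with $\theta=(1-\gamma)/(2e)$, which is your stated alternative; applying Markov pointwise to the second moment from that lemma's proof is the same computation and gives the same constant $8e^2c$.
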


\begin{remark}[Expected Robust Risk] \label{rem:expectationbound}
    Note that by an exchangeability argument, the leave-one-out-error in \prettyref{lem:leave-one-out-margin} translates to an expected {\em robust} risk bound of
    \[
        \Ex{S\sim P^n} \brk{\Ex{(X, Y)\sim P}\brk{\sup_{Z \in \Uc(X)} \1\set{ Y \cdot \widehat{\B}_{S} (Z) \leq \gamma }}} \leq \frac{C}{(1-\gamma)^2}\frac{d}{(n+1)}.
    \]
\end{remark}

\begin{proof}[Proof of \prettyref{lem:leave-one-out-margin}]
    \begingroup
    \setlength{\parskip}{1.75pt}
    \setlength{\abovedisplayskip}{5.1pt plus 1pt minus 2pt}
    \setlength{\belowdisplayskip}{5.1pt plus 1pt minus 2pt}
    \setlength{\abovedisplayshortskip}{3.1pt plus 1pt}
    \setlength{\belowdisplayshortskip}{4.1pt plus 1pt minus 1pt}
    Let $m=n-1$. Let $D={\rm Unif}(T)$ be a uniform distribution over $T$.
    Fix an arbitrary $(X_i, Y_i) \in T$ and an arbitrary $Z_i\in \Uc(X_i)$.
    Denote by $E_i$ the event that $(X_i, Y_i)$ appears in a random sample $S \sim D^m$.
    By law of total probability,
    \[
        \Pr_{S\sim D^{m}}\brk{\widehat{f}_S(Z_i) \neq Y_i} =
        \Pr\brk{E_i}\Pr\brk{\widehat{f}_S(Z_i) \neq Y_i \mid E_i} +
        \Pr\brk{\bar{E}_i}\Pr\brk{\widehat{f}_S(Z_i) \neq Y_i \mid \bar{E}_i}.
    \]
    Because $\widehat{f}$ is an $\RERM$ oracle for $\Fc$ and because distribution $D$
    is robustly realizable, it follows that for any sample $S\sim D^n$,
    $\widehat{f}_S$ is robustly correct on $S$: for each $(X,Y)\in S$ and each $Z\in \Uc(X)$, $\widehat{f}_S(Z) = Y$.
    Under event $E_i$, $(X_i, Y_i)\in S$, thus, $\widehat{f}_S(Z_i)=Y_i$. This implies that
    \[
        \Pr_{S\sim D^{m}}\brk{\widehat{f}_S(Z_i) \neq Y_i} =
        \Pr\brk{\bar{E}_i}\Pr\brk{\widehat{f}_S(Z_i) \neq Y_i \mid \bar{E}_i} =
        \prn{1 - \frac{1}{n}}^{m} \Pr\brk{\widehat{f}_S(Z_i) \neq Y_i \mid \bar{E}_i}.
    \]

    Let $D_{-i}={\rm Unif}(T_{-i})$ be a uniform distribution on $T_{-i}$.
    Observe that sampling $S\sim D^m$ conditioned on event $E_i$ is equivalent to
    sampling $\tilde{S}\sim D^m_{-i}$,
    \[
        \Pr_{S\sim D^m}\brk{\widehat{f}_S(Z_i) \neq Y_i \mid \bar{E}_i} = \Pr_{\tilde{S}\sim D_{-i}^m}\brk{\widehat{f}_{\tilde{S}}(Z_i) \neq Y_i}.
    \]
    Combining the above, we establish that for each $(X_i, Y_i) \in T$ and each $Z_i \in \Uc(X_i)$,
    \begin{equation}
        \label{eqn:leave-out}
        \Pr_{S\sim D^m}\brk{\widehat{f}_S(Z_i) \neq Y_i} =
        \prn{1 - \frac{1}{n}}^{m} \Pr_{\tilde{S}\sim D_{-i}^m}\brk{\widehat{f}_{\tilde{S}}(Z_i) \neq Y_i} \geq
        \frac{1}{e} \Pr_{\tilde{S}\sim D_{-i}^m}\brk{\widehat{f}_{\tilde{S}}(Z_i) \neq Y_i},
    \end{equation}
    where the last inequality follows from the fact that for $m=n-1$, $(1-1/n)^{n-1} \geq 1/e$ for all $n \geq 2$.

    Fix an arbitrary $(X_i, Y_i)\in T$ and consider the averaging classifier over all bootstraps $\widehat{\B}_{T_{-i}}$ computed on $T_{-i}$. Observe that if
    \[
        \sup_{Z_i \in \Uc(X_i)} \1\set{ Y_i \cdot \widehat{\B}_{T_{-i}} (Z_i) \leq \gamma } = 1,
    \]
    then, by definition, there exists $Z_i \in \Uc(X_i)$ such that
    \[
        \Pr_{\tilde{S}\sim D_{-i}^m}\brk{\widehat{f}_{\tilde{S}}(Z_i) \neq Y_i} = \frac{1}{2}\prn{1-Y_i\widehat{\B}_{T_{-i}}(Z_i)} \geq \frac{1-\gamma}{2}.
    \]
    Combining this observation with \eqref{eqn:leave-out}, we can relate the performance of $\widehat{\B}_{T_{-i}}$ with that of the performance of $\widehat{\B}_T$ as follows
    \[
        \sup_{Z_i\in\Uc(X_i)} \1\set{ Y_i \cdot \widehat{\B}_{T_{-i}} (Z_i) \leq \gamma } \leq \sup_{Z_i \in \Uc(X_i)} \1\set{ \Pr_{S\sim D^{m}}\brk{\widehat{f}_S(Z_i) \neq Y_i} \geq  \frac{1-\gamma}{2e} }
        = \1\set{a(X_i, Y_i) \geq \frac{1-\gamma}{2e}}.
    \]
    By summing over $i=1,\dots, n$ and invoking \prettyref{lem:key} with $\theta = (1-\gamma)/(2e)$, we obtain
    \[
        \frac{1}{n} \sum_{i=1}^{n} \sup_{Z_i \in \Uc(X_i)} \1\set{ Y_i \cdot \widehat{\B}_{T_{-i}} (Z_i) \leq \gamma }
        \leq \Pr_{(X,Y) \sim D} \brk{a(X,Y) \geq \frac{1-\gamma}{2e}} \leq c\cdot 4e^2 \cdot \frac{1}{(1-\gamma)^2} \cdot \frac{d}{m}.
    \]
    Since $m=n-1$ and $n\geq 2$, it follows that $n/(n-1) \leq 2$. This concludes the proof.
    \endgroup
\end{proof}

\subsection{High-Probability Bound and Sparsification}

Our goal here is to convert the leave-one-out-error bound of \prettyref{lem:leave-one-out-margin} to a high probability bound.

\paragraph{Suffix Averaging.} Our first step is to apply the suffix averaging technique due to \cite*{DBLP:conf/focs/Aden-AliCSZ23} on top of bagging \eqref{eqn:full-bagging}. Specifically, for a sequence $S=((X_1, Y_1), \dots, (X_n, Y_n))$ where $n/4$ is an integer, let $J_n=\{n/4,\dots, n-1\}$. For $t\in J_n$, let $S_{\leq t}=((X_i, Y_i))_{i\leq t}$ be a suffix of length $t$, and let $\widehat{B}_{S_{\leq t}}$ be the bagging predictor produced on $S_{\leq t}$.

The proof of the following lemma is a straightforward extension of the suffix averaging argument of \citet*[][Theorem 2.1]{DBLP:conf/focs/Aden-AliCSZ23} applied to our margin-robust loss. We defer it to \prettyref{app:appendix1}.

\begin{lemma}[High-probability suffix average]
    \label{lem:high-prop}
    Suppose that $n/4$ is an integer. Then, for every
    distribution $P$ over $\X\times \Y$ satisfying $\inf_{f^\star \in \Fc} \Risk(f^\star; P)=0$ (i.e., robustly realizable), every $\delta\in(0,1)$, and $S\sim P^n$, with probability at least $1-\delta$,
    \[
        \frac{4}{3n} \sum_{t=n/4}^{n-1} \Ex{(X,Y)\sim P} \brk{ \sup_{Z\in\Uc(X)} \1\set{Y\cdot \widehat{\B}_{S_{\leq t}}(Z)\leq\gamma}}
        \leq
        4.82 C\prn{ \frac{d}{(1-\gamma)^2n} + \frac{1}{n}\log\frac{2}{\delta}}.
    \]
\end{lemma}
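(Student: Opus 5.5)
We will follow the suffix-averaging argument of \citet*[Theorem 2.1]{DBLP:conf/focs/Aden-AliCSZ23}, treating the margin-robust loss $\ell_\gamma(h;(x,y)) = \sup_{z\in\Uc(x)} \1\set{y\cdot h(z)\le\gamma}$ as a generic bounded loss. For $t\in J_n$, write $L_t := \Ex{(X,Y)\sim P}\brk{\ell_\gamma(\widehat{\B}_{S_{\le t}};(X,Y))}$. The quantity to control is $\bar L := \frac{4}{3n}\sum_{t\in J_n} L_t$.

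The core observation is that $(X_{t+1},Y_{t+1})$ is a fresh sample independent of $S_{\le t}$. Hence the indicators $\xi_t := \ell_\gamma(\widehat{\B}_{S_{\le t}};(X_{t+1},Y_{t+1}))$ satisfy $\E[\xi_t\mid S_{\le t}] = L_t$. They are therefore $[0,1]$-valued increments of a martingale with respect to the filtration $\mathcal G_t=\sigma(S_{\le t+1})$.

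\textbf{Step 1: concentration.} We apply a multiplicative (Freedman/Bernstein-type) martingale inequality, as in Aden-Ali et al. This gives, with probability $1-\delta/2$,
\[
\sum_{t\in J_n} L_t \le 2\sum_{t\in J_n}\xi_t + c_1\log(2/\delta).
\]
The factor $2$ arises because each increment has conditional variance at most $L_t$. So it suffices to bound $\sum_t \xi_t$ with high probability.

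\textbf{Step 2: bound $\sum_t\xi_t$.} This sum counts how often the predictor trained on a prefix fails robustly, with margin, on the next point. We bound it by relating it to leave-one-out quantities. Since $S_{\le t+1}$ is robustly realizable almost surely, \prettyref{lem:leave-one-out-margin} applied to $T=S_{\le t+1}$ gives
\[
\frac{1}{t+1}\sum_{i\le t+1}\ell_\gamma(\widehat{\B}_{(S_{\le t+1})_{-i}};(X_i,Y_i)) \le \frac{C d}{(1-\gamma)^2(t+1)}.
\]
By exchangeability of $S_{\le t+1}$, the conditional expectation of $\xi_t$ given the multiset of $S_{\le t+1}$ equals exactly this leave-one-out average. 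The index $t+1$ is a uniformly random position, and $(S_{\le t+1})_{-(t+1)}=S_{\le t}$; this holds because $\widehat{\B}$ is permutation-invariant, since it averages over all ordered bootstraps.

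So each $\xi_t$, conditioned on the \emph{reverse} filtration of exchangeable multisets, has mean at most $\frac{Cd}{(1-\gamma)^2(t+1)}$. Summing over $t\in J_n$, where $t+1\ge n/4$ and $|J_n|=3n/4$, gives a total mean of at most $3Cd/(1-\gamma)^2$. Applying a second multiplicative martingale bound along the reverse (exchangeable) filtration yields, with probability $1-\delta/2$,
\[
\sum_t\xi_t \le c_2\prn{\frac{Cd}{(1-\gamma)^2} + \log(2/\delta)}.
\]

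\textbf{Step 3: combine.} A union bound over the two events, followed by dividing by $3n/4$, gives $\bar L \le 4.82\,C\prn{\frac{d}{(1-\gamma)^2 n}+\frac1n\log\frac2\delta}$. The constants are tracked as in Aden-Ali et al.; we take $C\ge1$ so that the log term can be absorbed with the factor $C$.

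\textbf{Main obstacle.} The delicate step is Step 2: justifying the reverse-martingale structure. We must show that, conditioned on the multisets $\mathcal{M}_{t+1}$ of $S_{\le t+1}$ for all larger indices, the position of the last element is uniform. We must also check that this forms a valid (backwards) filtration along which a Freedman-type inequality applies. This is precisely the content of Aden-Ali et al.'s argument. The only new inputs needed are permutation-invariance of $\widehat{\B}$ and the fact that \prettyref{lem:leave-one-out-margin} holds deterministically for every realizable sequence. Both hold here, so their proof transfers verbatim.
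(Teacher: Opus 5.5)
Your structure is the paper's: a forward martingale bound turning $\sum_t L_t$ into $\sum_t \xi_t$, then a reverse martingale bound along the exchangeable filtration, then a union bound with $\delta/2$ for each. The reverse step needs only permutation invariance of $\widehat{\B}$ and the deterministic bound of \prettyref{lem:leave-one-out-margin} at each size $t+1$. Your justification of these steps (uniform last position given the multiset, and $S_{\le t}=(S_{\le t+1})_{-(t+1)}$) is correct.

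The problem is in Step 3: the explicit constant $4.82$ does not follow from the inequalities you wrote.

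\begin{itemize}
\item In Step 2 you bound the total conditional mean by $3\,Cd/(1-\gamma)^2$. The reverse bound's multiplicative factor $\frac{e^\lambda-1}{\lambda}$ is at least $1$, so $\sum_t\xi_t$ picks up at least $3\,Cd/(1-\gamma)^2$.
\item Step 1 doubles this, and dividing by $3n/4$ gives a coefficient on $\frac{Cd}{(1-\gamma)^2 n}$ of at least $\frac43\cdot 2\cdot 3=8>4.82$.
\item Deferring to ``constants tracked as in Aden-Ali et al.'' cannot fix this. Assuming $C\ge 1$ only helps the $\log(2/\delta)$ term, not the $d$-term.
\end{itemize}

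To reach the stated constant you need two changes.
\begin{itemize}
\item Bound the harmonic sum sharply: $\sum_{m=n/4+1}^{n}\frac1m\le\int_{n/4}^{n}\frac{dx}{x}=\log 4\approx1.386$, not $3$.
\item Keep both martingale inequalities in tunable form. That is, use $\frac{\eta e^\eta}{e^\eta-1}\sum_t\xi_t+\frac{e^\eta}{e^\eta-1}\log\frac1\rho$ for the forward step and $\frac{e^\lambda-1}{\lambda}M_n\sum_m\frac1m+\frac1\lambda\log\frac1\rho$ for the reverse step, with $M_n=Cd/(1-\gamma)^2$. Then set $\eta=0.78$ and $\lambda=0.82$.
\end{itemize}

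This gives $\frac43\frac{\eta e^\eta}{e^\eta-1}\log(4)\frac{e^\lambda-1}{\lambda}\approx4.125$ on $M_n/n$. It gives $\frac43\bigl(\frac{\eta e^\eta}{\lambda(e^\eta-1)}+\frac{e^\eta}{e^\eta-1}\bigr)\approx4.804$ on $\frac1n\log\frac2\delta$. Both are below $4.82$, and only then does $C\ge1$ let you write the bound as $4.82C(\cdot)$.

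The sharper harmonic bound is the essential fix. At these same parameters, using $3$ in place of $\log 4$ gives a $d$-coefficient of about $8.9$. As an $O(\cdot)$ statement, which is all \prettyref{thm:linearvc} uses, your argument is complete.
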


\paragraph{Suffix-Bagging Predictor.} Observe that \prettyref{lem:high-prop} is not enough by itself as it only guarantees that the average of the margin-robust risk over the suffixes is small. We introduce next the {\em suffix-bagging predictor} which averages all the suffix predictors and then proceed to bound its margin-robust risk. Formally, define the suffix-bagging predictor $\widehat{g}_S: \X \to [-1,1]$ as
{\vskip -5pt}
\begin{equation}
    \label{eqn:suffix-agg}
    \widehat{g}_S(x) := \frac{4}{3n} \sum_{t=n/4}^{n-1} \widehat{B}_{S_{\leq t}}(x) = \Ex{t\sim J_n}\brk{\Ex{S'\sim {\rm Unif}(S_{\leq t})^t} \brk{\widehat{f}_{S'}}} = \frac{1}{|J_n|} \sum_{t=n/4}^{n-1} \frac{1}{t^t} \sum_{I\in[t]^t} \widehat{f}_{S(I)}(x).
\end{equation}

The next lemma helps us bound the robust-margin loss of the suffix-bagging predictor \eqref{eqn:suffix-agg} by the average robust-margin loss of the suffixes.

\begin{lemma}[Averaging two margins]
    \label{lem:margin-average}
    Let $g_1,\ldots,g_k:\X\to[-1,1]$ and $\bar g:=(1/k)\sum_{j=1}^k g_j$.  For every $0\leq\gamma<\rho<1$ and every $(x,y)$,
    \begin{equation}
        \label{eqn:margin-average}
        \sup_{z\in\Uc(x)}\1\set{y\bar g(z)\leq \gamma} \leq \frac{1+\rho}{\rho-\gamma}\cdot \frac1k\sum_{j=1}^k \sup_{z\in\Uc(x)} \1\set{yg_j(z)\leq \rho}.
    \end{equation}
\end{lemma}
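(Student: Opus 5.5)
Since the right-hand side of \eqref{eqn:margin-average} is nonnegative, it suffices to treat the case where the left-hand side equals $1$. So fix $(x,y)$ and suppose there is some $z^\star\in\Uc(x)$ with $y\bar g(z^\star)\leq\gamma$. It is then enough to show that the fraction of indices $j$ for which $\sup_{z\in\Uc(x)}\1\set{yg_j(z)\leq\rho}=1$ is at least $(\rho-\gamma)/(1+\rho)$. We will in fact prove the stronger statement obtained by evaluating every $g_j$ at the single point $z^\star$: since each supremum is at least its value at $z^\star$, it suffices to show
\[
    \frac1k\sum_{j=1}^k \1\set{y g_j(z^\star)\leq\rho}\;\geq\;\frac{\rho-\gamma}{1+\rho}.
\]

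This reduces to a reverse-Markov argument for the numbers $u_j:=y g_j(z^\star)\in[-1,1]$, whose average $\bar u=y\bar g(z^\star)$ satisfies $\bar u\leq\gamma$. Let $p$ be the fraction of indices with $u_j\leq\rho$. Those indices contribute at least $-1$ each, and the remaining indices have $u_j>\rho$. Hence
\[
    \gamma\;\geq\;\bar u\;>\;-p+(1-p)\rho
\]
whenever $p<1$, and $\gamma\geq -p+(1-p)\rho$ holds in all cases. Rearranging gives $p(1+\rho)\geq\rho-\gamma$, that is, $p\geq(\rho-\gamma)/(1+\rho)$. Multiplying through by $(1+\rho)/(\rho-\gamma)>0$, which is positive because $\gamma<\rho$, yields \eqref{eqn:margin-average}.

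There is no real obstacle in this argument. The only point that needs care is that the supremum on the left-hand side is taken at a single perturbation $z^\star$, while the right-hand side is allowed to use different perturbations for different $j$. This mismatch works in our favor, because we lower-bound each per-$j$ supremum by its value at the common point $z^\star$. Note also that we never need the supremum on the left to be attained: the left-hand side equals $1$ exactly when some $z^\star$ satisfies $y\bar g(z^\star)\leq\gamma$, so an exact witness $z^\star$ always exists in the case we treat.
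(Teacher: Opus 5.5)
Your proposal is correct and follows essentially the same argument as the paper: fix a witness $z$ with $y\bar g(z)\le\gamma$, lower-bound $y\bar g(z)$ by $-q+(1-q)\rho$ where $q$ is the fraction of indices with $yg_j(z)\le\rho$, rearrange to get $q\ge(\rho-\gamma)/(1+\rho)$, and use that each per-$j$ supremum dominates its value at the common point $z$. Your extra care about the $p=1$ case and about the witness existing exactly when the left side equals $1$ is fine but not needed beyond what the paper does.
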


\begin{proof}
    Suppose $\sup_{z\in\Uc(x)}\1\set{y\bar g(z)\leq \gamma} = 1$, and choose $z\in\Uc(x)$ such that
    $y\bar g(z)\leq\gamma$.  Let $q$ be the fraction of indices $j \in[k]$ for which $y g_j(z)\leq\rho$.  Since every margin lies in $[-1,1]$,
    \[
        y\bar g(z) = \frac{1}{k}\sum_{j: yg_j(z)\leq \rho} yg_j(z) + \frac{1}{k}\sum_{j: yg_j(z)> \rho} yg_j(z) \geq -q+(1-q)\rho
        =\rho-(1+\rho)q.
    \]
    Since $y\bar g(z)\leq \gamma$, by rearranging terms, it follows that $q\geq(\rho-\gamma)/(1+\rho)$. By definition of $q$ and since $z\in \Uc(x)$, it follows that
    \[
        \frac1k\sum_{j=1}^k \sup_{z\in\Uc(x)} \1\set{yg_j(z)\leq \rho} \geq q \geq \frac{\rho - \gamma}{1 + \rho}.
    \]
    Rearranging terms concludes the proof.
\end{proof}

We now bound the margin-robust risk of the suffix-bagging predictor \eqref{eqn:suffix-agg} with high-probability.

\begin{lemma}
    \label{lem:full-agg}
    For any class $\Fc$ with finite VC dimension $d$, any $\Uc$, any (deterministic) $\RERM$ oracle $\widehat{f}$,
    any distribution $P$ over $\X\times \Y$ satisfying $\inf_{f^\star \in \Fc} \Risk(f^\star; P)=0$ (i.e., robustly realizable), any $n\geq 1$ and confidence parameter $\delta\in (0,1)$, with probability $1-\delta$ over $S\sim P^n$, it holds that
    \[
        \Ex{(X, Y)\sim P}\brk{\sup_{Z \in \Uc(X)} \1\set{ Y \cdot \widehat{g}_S(Z) \leq 1/2 }} \leq C_2\prn{\frac{d}{n} + \frac{1}{n}\log\prn{\frac{2}{\delta}}}.
    \]
\end{lemma}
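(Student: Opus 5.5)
The plan is to combine \prettyref{lem:margin-average} with \prettyref{lem:high-prop}. Since $\widehat g_S$ is the average of the $k=|J_n|=3n/4$ suffix bagging predictors $g_t=\widehat{\B}_{S_{\leq t}}$, $t\in J_n$, each mapping $\X$ to $[-1,1]$, it has exactly the form required by \prettyref{lem:margin-average}.

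\textbf{Step 1: pointwise reduction.} Apply \prettyref{lem:margin-average} with $\gamma=1/2$ and a fixed $\rho\in(1/2,1)$, say $\rho=3/4$. This gives, for every $(x,y)$,
\[
\sup_{z\in\Uc(x)}\1\set{y\widehat g_S(z)\leq 1/2}\leq \frac{1+3/4}{1/4}\cdot\frac{4}{3n}\sum_{t=n/4}^{n-1}\sup_{z\in\Uc(x)}\1\set{y\widehat{\B}_{S_{\leq t}}(z)\leq 3/4},
\]
so the multiplicative constant is $7$.

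\textbf{Step 2: take expectations.} Take the expectation over $(X,Y)\sim P$ on both sides and use linearity. The right-hand side becomes $7$ times the suffix-averaged margin-robust risk at margin $3/4$.

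\textbf{Step 3: high-probability bound.} Invoke \prettyref{lem:high-prop} with margin parameter $3/4$, which gives $(1-\gamma)^{-2}=16$. With probability at least $1-\delta$ over $S\sim P^n$, this bounds the suffix-averaged risk by $4.82C\prn{16d/n+\log(2/\delta)/n}$. Multiplying by $7$ yields the claim with $C_2=7\cdot 4.82\cdot 16\, C$. The margin choice here is deliberate: using margin $\gamma=1/2$ for $\widehat g_S$ and a strictly larger $\rho$ in \prettyref{lem:high-prop} ensures the constant depends only on $\rho-\gamma$ and $1-\rho$, with no dependence on $n$ or $d$.

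\textbf{Step 4: divisibility.} \prettyref{lem:high-prop} assumes $n/4$ is an integer, whereas the statement allows any $n\geq 1$. This is the only point that needs care, though it is not conceptually hard.
\begin{itemize}
\item For $n<4$, or whenever the right-hand side is at least $1$, the bound is trivial once $C_2\geq 4$.
\item Otherwise, the definition of $J_n$ and $\widehat g_S$ implicitly requires $n/4\in\mathbb{N}$, or can be read with floors. The argument goes through verbatim if $n$ is replaced by $n'=4\lfloor n/4\rfloor\geq n/2$ for $n\geq 4$, using only the first $n'$ samples. This costs at most a factor of $2$, which is absorbed into $C_2$.
\end{itemize}

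I expect no genuine obstacle beyond this bookkeeping. The substantive work, namely converting leave-one-out bounds into a high-probability statement, is already contained in \prettyref{lem:high-prop}.
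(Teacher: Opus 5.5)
Your proposal is correct and follows the same route as the paper: apply \prettyref{lem:high-prop} at margin $3/4$, then \prettyref{lem:margin-average} with $\gamma=1/2$, $\rho=3/4$ to the suffix predictors $\widehat{\B}_{S_{\leq t}}$, which gives the factor $7$ (the paper simply applies the two lemmas in the opposite order). The divisibility issue for $n/4\notin\mathbb{N}$ is left implicit in the paper, and your truncation to $n'=4\lfloor n/4\rfloor$ (really a convention for defining $\widehat g_S$ there) only affects the constant $C_2$.
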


\begin{proof}
    We start by invoking \prettyref{lem:high-prop} to convert the in leave-one-out error bound of \prettyref{lem:leave-one-out-margin} to a high probability bound, with margin parameter set to a constant equal to $3/4$. It follows that with probability at least $1-\delta$ over $S\sim P^n$,
    \[
        \frac{1}{|J_n|}\sum_{t\in J_n} \Ex{(X, Y)\sim P}\brk{\sup_{Z \in \Uc(X)} \1\set{ Y \cdot \widehat{\B}_{S_{\leq t}} (Z) \leq 3/4 }} \leq 64C_1 \prn{\frac{d}{n} + \frac{1}{n}\log\prn{\frac{2}{\delta}}}.
    \]
    To finish the proof, we invoke \prettyref{lem:margin-average} with margin parameters $\gamma=1/2, \rho=3/4$ and apply to the classifiers $g_t := \widehat{\B}_{S_\leq t}$ for $t\in J_n$. By definition \eqref{eqn:suffix-agg}, $\widehat{g}_S = 1/|J_n| \sum_{t\in J_n} g_t$. The resulting constant $C_2=(1+\rho)/(\rho-\gamma)64C_1=7\cdot64C_1= 448C_1$.
\end{proof}

Note that the suffix-bagging predictor \eqref{eqn:suffix-agg} is {\em intractable} to compute. To overcome this challenge, we observe that we can approximate \eqref{eqn:suffix-agg} via a sparsification technique due to \citet*{DBLP:journals/jacm/MoranY16} which we state in the lemma below.

\begin{lemma}[Sparsification, \citet*{DBLP:journals/jacm/MoranY16}]
    \label{lem:sparsification}
    Let $\Fc$ be a function class with finite {\em dual} VC dimension $d^\star$. Let $Q$ be an arbitrary distribution over $\Fc$. Then, for any $\eps, \delta \in (0,1)$, letting $N=O\prn{\frac{d^\star + \log(1/\delta)}{\eps^2}}$, with probability at least $1-\delta$ over $f_1,\dots, f_N \sim Q$,
    \[
        \forall x \in \X,\qquad \left| \frac{1}{N} \sum_{j=1}^{N} f_j(x) - \Ex{f\sim Q}\brk{f(x)} \right| \leq \eps.
    \]
\end{lemma}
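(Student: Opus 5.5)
The statement to prove is the sparsification lemma (\prettyref{lem:sparsification}). The plan is to read it as a uniform convergence statement for the \emph{dual} class and then apply the standard VC uniform deviation bound. For each $x\in\X$, define the dual function $e_x:\Fc\to\Y$ by $e_x(f)=f(x)$, and let $\Fc^\star=\set{e_x : x\in\X}$. By definition, the VC dimension of $\Fc^\star$ (as a class of binary functions on the domain $\Fc$) is $d^\star$.

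Draw $f_1,\dots,f_N$ i.i.d.\ from $Q$. These are i.i.d.\ points of the domain $\Fc$ of the dual class. The quantity we need to control is
\[
    \sup_{x\in\X}\left|\frac1N\sum_{j=1}^N e_x(f_j)-\Ex{f\sim Q}\brk{e_x(f)}\right|.
\]
Write $e_x(f)=2\cdot\1\set{f(x)=1}-1$. The desired deviation bound for $e_x$ at scale $\eps$ is then equivalent to a deviation bound at scale $\eps/2$ for the empirical frequencies of the sets $\set{f\in\Fc: f(x)=1}$, uniformly over $x\in\X$. This family of sets has VC dimension $d^\star$.

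Now invoke the classical uniform convergence theorem for VC classes (Vapnik--Chervonenkis, with the chaining or Talagrand refinement that removes the logarithmic factor). For a class of VC dimension $d^\star$, i.i.d.\ samples of size $N=O\prn{(d^\star+\log(1/\delta))/\eps^2}$ give uniform deviation at most $\eps/2$ with probability at least $1-\delta$, for any distribution. This holds in particular for $Q$ on $\Fc$. It gives exactly the claimed bound for all $x$ simultaneously.

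There are two technical points to watch.
\begin{itemize}
    \item \textbf{Measurability.} $\Fc$ may be an arbitrary (even uncountable) domain and $Q$ an arbitrary distribution. I would state the usual measurability assumption, or note that in the application $Q$ is supported on finitely many predictors. Indeed, $Q$ is the law of $\widehat f_{S'}$ for a fixed finite $S$, so it has finite support and all events are trivially measurable.
    \item \textbf{Avoiding a $\log(1/\eps)$ factor.} The crude symmetrization-plus-Sauer argument gives $N=O\prn{(d^\star\log(1/\eps)+\log(1/\delta))/\eps^2}$. To get the stated bound without the extra factor, I would cite the chaining-based bound. Alternatively, I would note that the application only needs a constant $\eps$ (e.g.\ $\eps=1/4$, to pass from margin $1/2$ to correct majority vote), where the log factor is harmless.
\end{itemize}
I expect this second point to be the only real obstacle.
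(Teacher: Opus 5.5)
Your proposal is correct and takes essentially the same route as the paper. The paper gives no separate proof and simply cites \citet*{DBLP:journals/jacm/MoranY16}, whose argument (as the paper's technical overview notes) is exactly uniform convergence, i.e.\ an $\eps$-approximation, for the dual class $\set{f\mapsto f(x): x\in\X}$ of VC dimension $d^\star$, using the Talagrand-type $O\prn{(d^\star+\log(1/\delta))/\eps^2}$ bound to avoid the $\log(1/\eps)$ factor.
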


\subsection{Putting the Pieces Together}

\begin{proof}[Proof of \prettyref{thm:linearvc}]

    We start by presenting the learning guarantee for the idealized suffix-bagging-aggregate predictor \eqref{eqn:suffix-agg}. By invoking \prettyref{lem:full-agg}, with probability at least $1 -\delta/2$ over $S\sim P^n$, it holds that
    \begin{equation}
        \label{eqn:bound1}
        \Ex{(X,Y)\sim P} \brk{\sup_{Z\in \Uc(X)} \1\set{Y \cdot \widehat{g}_S(Z) \leq 1/2}} \leq C_2\prn{\frac{d}{n} + \frac{1}{n}\log\prn{\frac{4}{\delta}}}.
    \end{equation}

    By definition \eqref{eqn:suffix-agg}, observe that the suffix-bagging-aggregate predictor $\widehat{g}_S$ can be viewed as a distribution $Q$ over predictors in the class $\Fc$. We can sample from this distribution $Q$ by sampling an index $t$ uniformly at random from $J_n$ and then sampling uniformly at random a bootstrap sample $S'$ from $S_{\leq t}$. By invoking \prettyref{lem:sparsification} with an approximation parameter $\eps = 1/4$, letting $N=O(d^\star + \log(4/\delta))$, we have with probability $1-\delta/2$ over the bootstrap samples $S'_1,\dots, S'_N \subseteq S$,
    \begin{equation}
        \forall x\in \X,\qquad \left| \frac{1}{N} \sum_{j=1}^{N} \widehat{f}_{S'_j}(x) - \widehat{g}_S(x)   \right| \leq \frac{1}{4}.
    \end{equation}
    This implies the following guarantee
    \begin{equation}
        \label{eqn:sparse-margin}
        \forall (z, y)\in \X \times \Y,\qquad \text{ if } y \widehat{g}_S(z) > \frac{1}{2}
        \text{ then }y \prn{\frac{1}{N} \sum_{j=1}^{N} \widehat{f}_{S'_j}(z)} > \frac{1}{4}.
    \end{equation}
    Combining \eqref{eqn:bound1} and \eqref{eqn:sparse-margin}, with probability at least $1-\delta$ over the random draw of a training dataset $S\sim P^n$ and $N$ bootstrap samples $S'_1\dots, S'_N \subseteq S$,
    \begin{align}
        \Risk\prn{ \MAJ(\widehat{f}_{S'_1},\dots, \widehat{f}_{S'_N}); P}
        &\leq \Ex{(X,Y)\sim P} \brk{\sup_{Z\in \Uc(X)} \1\set{Y \prn{\frac{1}{N} \sum_{j=1}^{N} \widehat{f}_{S'_j}(Z)}\leq 1/4}}\\
        &\leq  \Ex{(X,Y)\sim P} \brk{\sup_{Z\in \Uc(X)} \1\set{Y \cdot \widehat{g}_S(Z) \leq 1/2}}\\
        &\leq C_2\prn{\frac{d}{n} + \frac{1}{n}\log\prn{\frac{4}{\delta}}}.
    \end{align}
    This concludes the proof.
\end{proof}

\section{Proof of \texorpdfstring{\prettyref{thm:lowerbound}}{Theorem \getrefnumber{thm:lowerbound}}: Oracle Complexity Lower Bound}

In this section, we provide a complete proof of \prettyref{thm:lowerbound} which we restate below.

\oraclelowerboundthm*

\subsection{Construction}

\paragraph{Parameters.} Let $d, m \geq 1$. We construct below a family of function classes, each of which has VC dimension $d$ and dual VC dimension $d^\star=2^{d+1} - 1$. Parameter $m$ denotes the number of samples provided to the learning algorithm which can be arbitrarily large, and the number of queries the algorithm can make to an RERM oracle is capped to $q= d^\star - 1$.

Set auxiliary parameters $B=(d^\star-1)/2 = 2^{d}-1, \eta = 10^{-3}, \delta = \eta /d^\star, p = 1-\delta$. Let $K=\ceil{\frac{d^\star}{\eta \delta^m}}$, and $N= m + \ceil{\frac{m}{\eta} + \frac{q\log(K+1)}{2\eta^2}}$. $K$ will be the cardinality of each of the function classes, $N$ plays a role in the cardinality of the instance space. Later in the proof we will make use of the following inequalities which can be verified given the choice of parameters:
\begin{equation}
    \label{eq:param-constraint}
    K \geq d^\star/\eta > d^\star > 2^d,\qquad \frac{q}{K\delta^m} < \eta,\qquad \frac{m}{N} < \eta,\qquad \sqrt{\frac{q\log(K+1)}{2(N-m)}} < \eta.
\end{equation}

\paragraph{Instance Space.} Define the finite set of latent instances
\[
    \Theta := \set{(b, J): b\in \set{-1,1}, J\subseteq [K], |J| \leq B}.
\]
For every $r \in [N]$, let $Z_r$ be an abstract set of cardinality $|\Theta|$, with the $Z_r$'s being pairwise disjoint. We a priori fix an ordering of $Z_r$. For every $V\subseteq Z_r$, introduce a center instance $c_{r, V}$. Define the instance space
\[
    \X := \prn{\cup_{r=1}^{N} Z_r} \cup \set{c_{r, V}: r\in [N], V\subseteq Z_r }.
\]

\paragraph{Perturbations.} Define the perturbation map
\[
    \Uc(z):= \set{z}\quad (z\in Z_r), \quad \Uc(c_{r, V}):= V.
\]

\paragraph{Family of Function Classes.} We will now define the family of function classes based on the set of latent instances $\Theta$ and a bijection tuple $\pi=(\pi_1,\dots, \pi_N)$ where each $\pi_r: Z_r \to \Theta$ is a bijection map. For each $\pi$, the function class $\Fc_\pi =\set{f^\pi_1,\dots, f^\pi_K}$ is defined as follows. If $\pi_r(z) = (b, J)$, define for every $i\in [K]$,

\[
    f_i^\pi(z)
    :=
    \begin{cases}
        -b,&i\in J,\\
        b,&i\notin J,
    \end{cases}
    \qquad
    f_i^\pi(c_{r,V}):=+1.
\]
Note that the functions $f^\pi_i$ are distinct since the latent instance $(+1, \set{i})$ separates $f^\pi_i$ from every $f^\pi_j$, $j\neq i$. Note also that all functions in $\Fc_\pi$ label the center instances $c_{r, V}$ with $+1$, the nontrivial action happens only in the blocks $Z_r, r\in [N]$.

This defines the family of function classes $\set{\Fc_\pi}_{\pi}$ we will use in the lower bound. In the proof, we will draw the block bijections $\pi=(\pi_1,\dots, \pi_N)$ independently and uniformly at random. It may be helpful to think of the (random) function class $\Fc^{\Pi}$ as a matrix with columns indexed by $1, \dots, K$, and rows indexed by the blocks $Z_1, \dots, Z_N$.

\begin{lemma}[Primal and Dual VC dimension]\label{lem:dimensions}
    For every \(\pi\), $\vc(\Fc_\pi)=d$ and $\vc^\star(\Fc_\pi)=d^\star = 2^{d+1} - 1$.
\end{lemma}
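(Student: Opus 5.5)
We compute both dimensions block by block. Since each $\pi_r$ is a bijection onto $\Theta$, every block realizes exactly the pattern matrix $M$ with rows indexed by $\Theta$ and columns indexed by $[K]$, where $M[(b,J),i]=-b$ if $i\in J$ and $M[(b,J),i]=b$ otherwise. Every center instance $c_{r,V}$ carries the constant row $+1$. A constant row cannot be shattered by two or more functions, and it contributes nothing to separating functions. So it suffices to compute the VC and dual VC dimensions of the column class $\{i\mapsto M[\cdot,i]\}$ on $\Theta$; duplicating rows across blocks changes neither dimension. Throughout we use $K>2^d$ and $K\ge 2B+1$ from \eqref{eq:param-constraint}.

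\textbf{Dual dimension.} The dual class consists of the maps $i\mapsto M[\theta,i]$ for $\theta\in\Theta$, and we ask for the largest set of columns $I\subseteq[K]$ shattered by rows.

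For the lower bound, take $|I|=2B+1$ and a target sign vector $s\in\{\pm1\}^I$. Let $P$ be the set of indices where $s=+1$ and $M$ the set where $s=-1$. One of them has size at most $B$. If $|M|\le B$, use the row $(+1,M)$: it gives $-1$ exactly on $M$ and $+1$ elsewhere. If instead $|P|\le B$, use the row $(-1,P)$: it gives $+1$ exactly on $P$ and $-1$ elsewhere. Either way the pattern $s$ is realized, so $\vc^\star\ge 2B+1$.

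For the upper bound, take any $I$ with $|I|=2B+2$ and consider patterns with exactly $B+1$ entries equal to $-1$. The row $(+1,J)$ yields $-1$ exactly on $J\cap I$, which forces $|J|\ge B+1$, impossible. The row $(-1,J)$ yields $-1$ exactly on $I\setminus J$, which forces $|J\cap I|=B+1$, also impossible. Hence $\vc^\star=2B+1=2^{d+1}-1$.

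\textbf{Primal dimension.} The primal question is which sets of rows $\theta_1,\dots,\theta_k$ can be shattered by the columns.

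For the lower bound, fix $d$ rows $(+1,J_1),\dots,(+1,J_d)$. Assign to each of the $2^d$ sign patterns $s$ a distinct column $i_s$, using $K\ge 2^d$. Put $i_s\in J_\ell$ exactly when $s_\ell=-1$. Then column $i_s$ outputs $s$ on these rows. Each $J_\ell$ contains the columns of the patterns with $s_\ell=-1$, of which there are $2^{d-1}\le 2^d-1=B$, so every row is a valid element of $\Theta$.

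For the upper bound, which is the main obstacle, suppose $d+1$ rows $(b_\ell,J_\ell)$ are shattered. Each pattern needs its own column, giving $2^{d+1}$ distinct columns. Column $i$ differs from the \emph{default} pattern $(b_1,\dots,b_{d+1})$ exactly at the coordinates $\ell$ with $i\in J_\ell$. So every pattern other than the default requires its column to lie in some $J_\ell$. Counting memberships, $\sum_\ell |J_\ell|\ge\sum_{s}\#\{\ell: s_\ell\ne b_\ell\}=(d+1)2^{d}$. On the other side, $\sum_\ell|J_\ell|\le(d+1)B=(d+1)(2^d-1)$, a contradiction. The averaging step to check carefully is that flipping coordinate $\ell$ relative to the default happens for exactly $2^d$ of the $2^{d+1}$ patterns, so $|J_\ell|\ge 2^d>B$ already holds for a single $\ell$. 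Hence $\vc=d$, and both values are independent of $\pi$.
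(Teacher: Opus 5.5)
Your proof is correct and follows essentially the same route as the paper. It uses the same $(+1,J_\ell)$ construction for $\vc\ge d$, the same case split for the dual lower bound on whether the negative set or its complement has size at most $B$, and the same impossibility of exactly $B+1$ negatives among $2B+2$ columns for both upper bounds. Your bound $|J_\ell|\ge 2^d>B$ is just the paper's negative-count argument phrased as set membership, and your explicit reduction to the latent pattern matrix (discarding the constant center instances and noting that duplicating rows across blocks changes neither dimension) fills in a step the paper leaves implicit.
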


\begin{proof}

    We start by showing $\vc(\Fc_\pi) \geq d$. Choose \(2^d=B+1\) distinct indices
    \[
        \{i_s:s\in\{-1,+1\}^d\}\subseteq[K].
    \]
    For \(j\in[d]\), let
    \[
        J_j:=\{i_s:s_j=-1\}.
    \]
    Then \(|J_j|=2^{d-1}\leq B\).  In one block \( \pi_r: Z_r \to \Theta \), choose the instances \(z_j \in Z_r\) that map to \((+1,J_j)\).  For every \(s\in\{-1,+1\}^d\), $f_{i_s}^\pi(z_j)=s_j$. Hence \(z_1,\ldots,z_d\) are shattered.

    We now show that $\vc(\Fc_\pi)\leq d$. Suppose that \(d+1\) instances were shattered.  Choose one function for each of the $2^{d+1}=2B+2$ label vectors on these points.  At each selected instance, exactly half of
    these functions, namely \(B+1\), must be negative. But, by construction of the class, this is impossible. Specifically, on any
    \(2B+2\) functions in $\Fc_\pi$, a latent instance $(b, J)\in \Theta$ has at most \(B\) negative values (if $b=1$) or at least
    \(B+2\) negative values (if $b=-1$), because $|J|\leq B$ and $K - |J| \geq B+2$.  Therefore \(\vc(\Fc_\pi)=d\).

    We proceed to showing $\vc^\star(\Fc_\pi)=d^\star = 2B+1$. For the lower bound, choose any \(G\subseteq[K]\) with $|G|=d^\star=2B+1$. Given \(J\subseteq G\), if \(|J|\leq B\), the latent instance \((+1,J)\) is labeled negative by every $f^\pi_i, i\in J$ and positive by every $f^\pi_i, i \in G \setminus J$. If \(|J|\geq B+1\), then \(|G\setminus J|\leq B\), and the latent instance \((-1,G\setminus J)\) is labeled negative by every $f^\pi_i, i\in J$ and positive by every $f^\pi_i, i \in G \setminus J$. Hence, the dual class shatters $G$.

    For the upper bound, we claim that the dual class cannot shatter \(d^\star+1=2B+2\) functions.  Indeed, on such a set, a subset $G$ of size $B+1$ can not be labeled negative since it is neither small nor the complement of a small
    set, so no latent instance in $\Theta$ can realize this labeling.  Thus
    \(\vc^\star(\Fc_\pi)=d^\star\).
\end{proof}

\paragraph{The Family of Distributions.} For every $\pi$ and $t\in[K]$, define a distribution $P_{\pi,t}$ over $\X \times \Y$ as follows. A sample $(X, Y)\sim P_{\pi, t}$ is generated by:
\begin{enumerate}[label=(\roman*)]
    \item Draw $R\sim\Unif([N])$ and $Y\sim\Unif(\Y)$ independently.
    \item Form $A\subseteq[K]\setminus\{t\}$ by including each
    $i\neq t$ independently with probability $p$;
    \item Form the set of perturbations $V_\pi(R,A,Y)
    :=\{z\in Z_R:\pi_R(z)=(Y,J)\text{ for some }J\subseteq A\}$ and set $X=c_{R,V_\pi(R,A,Y)}$.
    \item Output $(X,Y)$.
\end{enumerate}

In the lower bound proof, we will be using the collection $\set{(\Fc_\pi, P_{\pi, t}): \pi, t\in [K]}$. The family of distributions $\set{P_{\pi, t}}_{t\in [K]}$ is {\em robustly} realizable by the class $\Fc_\pi$ as given by the following lemma.

\begin{lemma}[Robust Realizability]\label{lem:loss-identity}
    Let $(X, Y) \sim P_{\pi, t}$ with corresponding latent random variables $(R,A,Y)$. For every $i\in[K]$,
    \begin{equation}\label{eq:loss-identity}
        \sup_{z\in \Uc(X)} \1\set{f^\pi_i(z)\neq Y} = \sup_{z\in V_{\pi}(R,A,Y)} \1\set{f^\pi_i(z)\neq Y} = \1\set{i \in A}.
    \end{equation}
    Consequently, $\Risk(f_t^\pi;P_{\pi,t})=0$.
\end{lemma}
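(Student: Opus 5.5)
The identity \eqref{eq:loss-identity} follows by unwinding the definitions, after which realizability is immediate because $t$ is never placed in $A$. The plan is to verify the two equalities in turn and then take expectations.

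\textbf{First equality.} By step (iii) of the sampling procedure, $X=c_{R,V}$ with $V=V_\pi(R,A,Y)\subseteq Z_R$. By the definition of the perturbation map, $\Uc(c_{R,V})=V$. So the supremum over $\Uc(X)$ is literally a supremum over $V_\pi(R,A,Y)$. One small caveat: $\Uc(x)$ is required to be non-empty. This holds, since $J=\emptyset\subseteq A$ always gives the latent instance $(Y,\emptyset)\in\Theta$, and bijectivity of $\pi_R$ supplies a point $z\in Z_R$ with $\pi_R(z)=(Y,\emptyset)$. Hence $V\neq\emptyset$.

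\textbf{Second equality.} Every $z\in V$ satisfies $\pi_R(z)=(Y,J)$ for some $J\subseteq A$ with $|J|\le B$. By the definition of $f_i^\pi$, we have $f_i^\pi(z)\neq Y$ exactly when $i\in J$. We split into two cases.
\begin{itemize}
\item If $i\notin A$, then $i\notin J$ for every such $J$. Thus every $z\in V$ is labeled $Y$, and the supremum is $0$.
\item If $i\in A$, take $J=\{i\}$. It satisfies $|J|=1\le B$ (using $B=2^d-1\ge1$) and $J\subseteq A$, so $(Y,\{i\})\in\Theta$. Bijectivity of $\pi_R$ again gives a $z\in V$ with $\pi_R(z)=(Y,\{i\})$, and on this $z$ we have $f_i^\pi(z)=-Y\neq Y$. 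So the supremum is $1$.
\end{itemize}
Together the two cases show the supremum equals $\1\set{i\in A}$.

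\textbf{Realizability.} By construction $A\subseteq[K]\setminus\{t\}$, so $t\notin A$ surely. The identity with $i=t$ then gives robust loss $0$ for $f_t^\pi$ on every draw. Taking the expectation over $(X,Y)\sim P_{\pi,t}$ yields $\Risk(f_t^\pi;P_{\pi,t})=0$.

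The only points needing care are non-emptiness of $V$ and the existence of the witnessing point for $i\in A$. Both rest on $\pi_R$ being a bijection onto all of $\Theta$ and on $B\ge1$. No step is a genuine obstacle.
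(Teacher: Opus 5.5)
Your proof is correct and uses the same case analysis as the paper. For $i\notin A$ every $J\subseteq A$ excludes $i$, so the loss is $0$; for $i\in A$ the point $\pi_R^{-1}(Y,\{i\})$ is a witness with loss $1$; and $t\notin A$ gives realizability. Your additional checks, that $\Uc(X)=V_\pi(R,A,Y)$ is non-empty via $(Y,\emptyset)$ and that $(Y,\{i\})\in\Theta$ because $B\geq 1$, are details the paper leaves implicit.
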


\begin{proof}
    Recall that $V_\pi(R,A,Y)=\{z\in Z_R:\pi_R(z)=(Y,J)\text{ for some }J\subseteq A\}$.
    If $i\notin A$, then every $J\subseteq A$ excludes $i$, so
    $f_i^\pi(z)=Y$ for every $z\in V_\pi(R,A,Y)$.  Hence the robust loss is zero.  If
    $i\in A$, then there is a perturbation $z\in V_\pi(R,A,Y)$ such that $\pi_R(z)=(Y,\{i\})$, on which $f_i^\pi(z)=-Y$, so the robust loss is one.  Since the distribution $P_{\pi,t}$ always excludes $t$ from $A$, the target $f_t^\pi$ has zero robust risk.
\end{proof}

\paragraph{Lower Bound Analysis.} The bijection maps $\Pi=(\Pi_1, \dots,\Pi_N)$ are drawn independently and uniformly at random. This determines the random function class $\Fc_\Pi$. In addition, the target function is drawn independently $T\sim \Unif([K])$. This defines the random distribution $P_{\Pi, T}$. Then, a training sample $S=((X_j, Y_j))\sim P_{\Pi, T}^m$ is generated, with corresponding latent variables $((R_j, A_j))_{j=1}^{m}$.

Upon receiving a training sample $S \sim P_{\Pi, T}^m$ as input, the learner $\Alg$ makes at most $q$ queries to an $\RERM$ oracle for $\Fc_\Pi$. We denote by $I=(I_1,\dots, I_q) \in ([K]\cup\set{\perp})^q$ the ordered sequence of indices of the returned functions by the oracle where unused queries are padded with $\perp$. We denote by $Q=Q(I)\subseteq [K]$ the set of distinct indices, $|Q|\leq q$. We denote by $\Phi_I=(f^\Pi_{I_1}, \dots, f^{\Pi}_{I_q})$ the returned functions by the oracle. After its interaction and oracle queries, the learner $\Alg$ outputs an arbitrary predictor $H: \X \to \Y$. Note that $H$ depends on $S, I, \Phi_I$ and $\Alg$'s internal randomness. Our goal is to bound from below the expected robust risk
\[
    \Ex{\Pi, T} \Ex{S\sim P_{\Pi, T}^m} \Risk(H; P_{\Pi, T}) = \Ex{\Pi, T} \Ex{S\sim P_{\Pi, T}^m} \Ex{(X,Y)\sim P_{\Pi, T}} \brk{\sup_{Z\in \Uc(X)} \1\set{H(Z)\neq Y}}.
\]

\subsection{The Target is Unlikely to be Returned}
\begin{lemma}
    \label{lem:target-return}
    Let $Q\subseteq [K]$ be the set of distinct indices of all functions returned to a learner making at most $q$ oracle calls. Then,
    \[
        \Prob\set{T\in Q} < \eta.
    \]
\end{lemma}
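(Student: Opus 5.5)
The plan is to show that, conditional on everything the learner can possibly see, the target $T$ is uniform over a \emph{version space} that is typically huge. Since $Q$ has at most $q$ elements, the chance it contains $T$ is then small.

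\textbf{Step 1: the version space.} Define $\mathcal{V}:=[K]\setminus\bigcup_{j=1}^m A_j$. By \prettyref{lem:loss-identity}, the robust loss of $f^\Pi_i$ on $(X_j,Y_j)$ is $\1\set{i\in A_j}$. Hence every $i\in\mathcal V$ is robustly consistent with $S$, and $T\in\mathcal V$ always.

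\textbf{Step 2: the posterior of $T$ is uniform on $\mathcal V$.} I would show that, conditional on $(\Pi,(R_j,A_j,Y_j)_{j\le m})$, $T$ is uniform on $\mathcal V$. The prior on $T$ is uniform. The likelihood of $(A_j)_j$ given $T=t$ is $\1\set{t\notin\bigcup_j A_j}\cdot\prod_j\prod_{i\neq t}p^{\1\{i\in A_j\}}(1-p)^{\1\{i\notin A_j\}}$. For $t\in\mathcal V$, rewrite this as $(1-p)^{-m}\prod_j\prod_{i\in[K]}(\cdots)$, which does not depend on $t$. The pairs $(R_j,Y_j)$ and $\Pi$ are independent of $T$.

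Next, the oracle's answers, the learner's randomness, and hence $Q$ are functions of $\Pi$, $S$, internal randomness and the oracle's (deterministic) tie-breaking. None of these involves $T$ beyond $S$: the oracle only receives sample sequences, which are built from $S$ and arbitrary points. Therefore, conditional on $(\Pi,S_{\rm latent},\text{randomness})$, $Q$ is fixed while $T$ is still uniform on $\mathcal V$. This gives $\Prob\set{T\in Q\mid\cdot}\le |Q|/|\mathcal V|\le q/|\mathcal V|$. The step I expect to require the most care is exactly this point: the oracle may depend adaptively on $\Pi$, but it cannot depend on $T$ other than through $S$. I would make this explicit by writing the whole interaction as a measurable function of $(\Pi,S,\omega)$.

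\textbf{Step 3: the version space is large.} Each $i\neq T$ lies in $\mathcal V$ independently with probability $(1-p)^m=\delta^m$. So $|\mathcal V|-1\sim\mathrm{Bin}(K-1,\delta^m)$ with mean $\mu=(K-1)\delta^m$. I would bound $\E[q/|\mathcal V|]$ using the standard identity $\E[1/(1+\mathrm{Bin}(n,r))]=\frac{1-(1-r)^{n+1}}{(n+1)r}\le \frac{1}{(n+1)r}$. With $n=K-1$ and $r=\delta^m$ this gives $\Prob\set{T\in Q}\le \frac{q}{K\delta^m}<\eta$, by the parameter constraint \eqref{eq:param-constraint}. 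The inequality is strict because of the ceiling in the choice of $K$, as recorded in \eqref{eq:param-constraint}.
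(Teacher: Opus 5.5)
Your proposal is correct and follows the paper's proof essentially step for step. It uses the version space $[K]\setminus\bigcup_j A_j$, notes that the likelihood of $(A_j)_j$ is the same for every $t$ in it (so $T$ is conditionally uniform there while $Q$ is a function of $(\Pi,S,\text{randomness})$), and applies the identity $\E[1/(1+X)]=\frac{1-(1-a)^K}{Ka}\le\frac{1}{Ka}$ for $X\sim\mathrm{Bin}(K-1,a)$ with $a=\delta^m$. One small correction: the final inequality is strict because $q=d^\star-1<d^\star$, not because of the ceiling, which only gives $K\delta^m\ge d^\star/\eta$ and hence $q/(K\delta^m)\le \eta q/d^\star<\eta$.
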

\begin{proof}
    Given a realization of latent variables $A_1,\dots, A_m$ generating the training sample $S$, define the set of sample-consistent target indices
    \[
        V_S := [K]\setminus \cup_{j=1}^{m} A_j.
    \]
    For any realization of $A_1,\dots, A_m$ and any $t\in V_S$,
    \[
        \Prob\set{A_1,\dots, A_m | T= t} = \prod_{j=1}^{m} p^{|A_j|}(1-p)^{K-1-|A_j|},
    \]
    which does not depend on the particular index $t$. Hence, conditioning on the random permutations $\Pi$, the training sample $S=(X_j, Y_j)_{j=1}^{m}$ and its latent variables $(A_j, R_j)_{j=1}^{m}$, $T$ is uniform on $V_S$. Therefore,
    \[
        \Prob\set{T\in Q \mid \Pi, S, (A_j, R_j)_{j=1}^{m}} = \frac{|Q \cap V_S|}{|V_S|} \leq \frac{q}{|V_S|}.
    \]
    Conditional on $T$, every other index other than $T$ is absent from all $m$ sets $A_1,\dots, A_m$ with probability $a=(1-p)^m$. Therefore $|V_S| = 1+X$ where $X \sim {\rm Bin}(K-1,a)$. Using $1/(1+k)=\int_0^1u^k\,du$,
    \[
        \E\brk{\frac1{1+X}}
        =\int_0^1(1-a+au)^{K-1}\,du
        =\frac{1-(1-a)^K}{Ka}
        \leq\frac1{Ka}.
    \]
    Taking expectations and using the fact that the choice parameters satisfies $Ka \geq d^\star/\eta, q = d^\star - 1$,
    \[
        \Prob\{T\in Q\} = \E\brk{\Prob\set{T\in Q \mid \Pi, S, (A_j, R_j)_{j=1}^{m}}} \leq q \E\brk{\frac{1}{|V_S|}} \leq\frac{q_0}{Ka} \leq\frac{q_0}{d^\star}\eta <\eta.
    \]
    This concludes the proof.
\end{proof}

\subsection{An Indistinguishable Opposite-Label Pair}
In this section, we describe a particular choice of a pair of perturbations of test examples that will have opposite labels but will be indistinguishable from the perspective of the learning algorithm conditioning on all available information during training.

Recall that there is a fixed ordering on each $Z_r$.  For every possible response $i\in([K]\cup\{\bot\})^q$ by the oracle, let $Q_i=Q(i)$ be its set of nondummy
indices.  Choose
\[
    Q_i^{+}\subseteq Q_i,
    \qquad |Q_i^{+}|=\left\lfloor\frac{|Q_i|}{2}\right\rfloor,
\]
and define
\begin{equation}\label{eq:special-types}
    \theta_i^+=(+1,Q_i^{+}),
    \qquad
    \theta_i^-=(-1,Q_i\setminus Q_i^{+}).
\end{equation}
Since $|Q_i|\leq q=2B = d^\star-1$, it follows that $|Q_i^{+}|\leq B$ and $|Q_i\setminus Q_i^{+}| =\left\lceil|Q_i|/2\right\rceil\leq B$.
Thus both $\theta_i^+, \theta_i^-$ in \eqref{eq:special-types} belong to the latent space $\Theta$.

Observe that $\theta_i^+, \theta_i^-$ are indistinguishable to all returned predictors indexed by $Q_i$. By construction of the class $\Fc$, if $j\in Q_i^{+}$, then $f_j(\theta_i^+)=f_j(\theta_i^-)=-1$. If $j\in Q_i\setminus Q_i^{+}$, then $f_j(\theta_i^+)= f_j(\theta_i^-)=+1$. Consequently,
\begin{equation}\label{eq:returned-agree}
    f_j(\theta_i^+)=f_j(\theta_i^-)
    \qquad\text{for every }j\in Q_i.
\end{equation}
In contrast, every unreturned predictor assigns opposite labels:
\begin{equation}\label{eq:unreturned-opposite}
    f_j(\theta_i^+)=+1,
    \qquad
    f_j(\theta_i^-)=-1
    \qquad\text{for every }j\notin Q_i.
\end{equation}
In particular, on the event $T\notin Q_i$, the target labels $\theta_i^+$ by $+1$ and $\theta_i^-$ by $-1$.

For a bijection $\rho:Z_r\to\Theta$, let
$z_{i,r}^+(\rho) = \rho^{-1}(\theta_i^+)$ and $z_{i,r}^-(\rho) = \rho^{-1}(\theta_i^-)$ be the instances in $Z_r$ to which $\theta_i^+$ and $\theta_i^-$ are mapped to. Write
\[
    L_{i,r}(\rho)=(u,v),\qquad u<v,
\]
for the ordered tuple of these two instances (according to the a priori fixed ordering on $Z_r$), without specifying which instance is mapped to $\theta_i^+$ and which is mapped to $\theta_i^-$. Define the orientation bit by
\begin{equation}\label{eq:omega-def}
    \Omega_{i,r}(\rho)=0
    \quad\Longleftrightarrow\quad
    z_{i,r}^+(\rho)=u,
\end{equation}
so $\Omega_{i,r}(\rho)=1$ means that $z_{i,r}^+(\rho)=v$ instead.
Let $\Phi_{i,r}(\rho) = (f_j|_{Z_r})_{j\in Q_i} \in \set{-1, 1}^{Z_r \times |Q_i|}$ be the ordered tuple, of the restrictions (or projections) of
the predictors with indices in $Q_i$ to $Z_r$, and set
\begin{equation}\label{eq:local-side-info}
    \Gamma_{i,r}(\rho)
    =\bigl(\Phi_{i,r}(\rho),L_{i,r}(\rho)\bigr).
\end{equation}
Thus $\Gamma_{i,r}$ reveals every column (predictor indexed in $Q_i$) on the block $r$ and also the
two relevant instances $u, v$ defined above, but not their orientation (which is mapped to $\theta_i^+$ and which is mapped to $\theta_i^-$).

We now verify that the above setup satisfies the requirements \eqref{eq:local-fairness} of \prettyref{lem:hidden-bit}. Let $\tau_i:\Theta\to\Theta$ exchange $\theta_i^+$ and $\theta_i^-$ and be identity otherwise.  For a block bijection $\rho$, put $\rho'=\tau_i\circ\rho$. By \eqref{eq:returned-agree}, replacing $\rho$ by $\rho'$ leaves every
returned column unchanged, $\Phi_{i, r}(\rho)=\Phi_{i, r}(\rho')$.  It also leaves the unordered location pair $L_{i,r}$ unchanged, but it flips the orientation $\Omega_{i,r}$. Hence, on every set $\set{\rho: \Gamma_{i,r}(\rho)=\gamma}$, the mapping $\rho\mapsto\rho'$ pairs each orientation-$0$ bijection with a unique orientation-$1$ bijection. Since $\Pi_r$ is uniform over all block
bijections, for every side-information value $\gamma$ of positive
probability,
\begin{equation}\label{eq:orientation-locally-fair}
    \Prob\{\Omega_{i,r}(\Pi_r)=0\mid\Gamma_{i,r}(\Pi_r)=\gamma\}
    =
    \Prob\{\Omega_{i,r}(\Pi_r)=1\mid\Gamma_{i,r}(\Pi_r)=\gamma\}
    =\frac12.
\end{equation}
Thus the orientation is uniform even after the returned columns and the
two relevant instances have been revealed.

\subsection{Low Total Variation Distance from Uniform}

The following lemma combines the information-usage method for controlling data-dependent selection bias \citep{RussoZou20} with the tensorization of relative entropy over independent coordinates \citep{MadimanTetali10}; applying data processing and Pinsker's inequality converts the resulting coordinatewise information budget into an average posterior-bias bound. Related individual-coordinate mutual-information bounds appear in \citet*{BuZouVeeravalli20}. Its proof is deferred to \prettyref{app:appendix2}.

\begin{restatable}{lemma}{itlemma}\label{lem:hidden-bit}
    Let $W_1,\ldots,W_n$ be independent random variables taking values in finite
    sets, and let $J$ be any finite-valued random variable jointly distributed
    with $W=(W_1,\ldots,W_n)$.  For each possible value $j$ of $J$ and each
    $r\in[n]$, let $\Omega_{j,r}=\omega_{j,r}(W_r)\in\{0,1\}$, $\Gamma_{j,r}=\gamma_{j,r}(W_r)$,
    and put $\Gamma_j=(\Gamma_{j,1},\ldots,\Gamma_{j,n})$.  Suppose that, under
    the prior law of $W_r$,
    \begin{equation}\label{eq:local-fairness}
        \Law(\Omega_{j,r}\mid\Gamma_{j,r})=\Unif(\{0,1\})
        \qquad\text{for every $j,r$}.
    \end{equation}
    Thus, for every fixed $j$, the bit $\Omega_{j,r}$ is fair even after the
    local side information $\Gamma_{j,r}$ is revealed.  Define the posterior
    bias
    \[
        \Delta_r=
        \TV\bigl(\Law(\Omega_{J,r}\mid J,\Gamma_J),\Unif(\{0,1\})\bigr).
    \]
    Then
    \begin{equation}\label{eq:hidden-bit}
        \E\left[\frac1n\sum_{r=1}^n\Delta_r\right]
        \leq\sqrt{\frac{H(J)}{2n}}.
    \end{equation}
\end{restatable}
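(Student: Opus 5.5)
We bound each $\Delta_r$ by Pinsker, tie its expected square to a mutual information between the local data and the selector $J$, and sum over $r$ using independence of the coordinates.

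\emph{Step 1: per-coordinate divergence.} Fix $r$. Since $\Delta_r$ depends only on the realized $(J,\Gamma_J)$, Pinsker's inequality gives
\[
\Delta_r\le\sqrt{\tfrac12\KL\bigl(\Law(\Omega_{J,r}\mid J,\Gamma_J)\,\|\,\Unif(\{0,1\})\bigr)}.
\]
For each fixed $j$, the local fairness \eqref{eq:local-fairness} and the independence of $W_r$ from $(W_s)_{s\neq r}$ imply that, under the prior, $\Law(\Omega_{j,r}\mid \Gamma_j)=\Law(\Omega_{j,r}\mid\Gamma_{j,r})=\Unif(\{0,1\})$. So the only bias comes from conditioning on the selected value $J=j$. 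By data processing (the map $W_r\mapsto\Omega_{j,r}$), the inner KL is at most $\KL\bigl(\Law(W_r\mid J=j,\Gamma_j)\,\|\,\Law(W_r\mid\Gamma_j)\bigr)$.

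\emph{Step 2: average over $j$.} Take expectation over $J$. We want
\[
\E\,\KL(\cdots)\le \E_{J}\,\KL\bigl(\Law(W_r\mid J)\,\|\,\Law(W_r)\bigr)=I(J;W_r),
\]
up to handling the conditioning on $\Gamma_J$. The cleanest route is to drop $\Gamma$: conditioning on $\Gamma_{j,r}$ only splits $W_r$ into fibers, and the chain rule for KL gives
\[
\KL\bigl(\Law(W_r\mid J=j)\,\|\,\Law(W_r)\bigr)\ge \E\bigl[\KL\bigl(\Law(W_r\mid J=j,\Gamma_{j,r})\,\|\,\Law(W_r\mid\Gamma_{j,r})\bigr)\mid J=j\bigr].
\]
The other coordinates $\Gamma_{j,s}$, $s\ne r$, need care. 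I would instead condition on all of $W_{-r}$: the quantity $\KL(\Law(W_r\mid J,W_{-r})\|\Law(W_r))$ has expectation $I(W_r;J\mid W_{-r})$, and by data processing through $(W_{-r},W_r)\mapsto (\Gamma_{j},\Omega_{j,r})$ (with fairness holding conditionally on $W_{-r}$ by independence) it dominates the needed term. Convexity of KL, i.e.\ the fact that posteriors given coarser information are averages, then lets us pass from conditioning on $(J,W_{-r})$ to conditioning on $(J,\Gamma_J)$. This gives
\[
\E\,\Delta_r\le \E\sqrt{\tfrac12 \KL_r}\le\sqrt{\tfrac12 I(W_r;J\mid W_{-r})}.
\]

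\emph{Step 3: tensorize and apply Jensen.} Because the $W_r$ are independent, the tensorization result \citep{MadimanTetali10} (Han-type inequality) gives
\[
\sum_r I(W_r;J\mid W_{-r})\le I(W;J)\le H(J).
\]
Concavity of the square root then yields
\[
\frac1n\sum_r\E\Delta_r\le\sqrt{\frac{1}{2n}\sum_r I(W_r;J\mid W_{-r})}\le\sqrt{\frac{H(J)}{2n}}.
\]

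The main obstacle is Step 2. We must show rigorously that the posterior bias, given the coarse information $(J,\Gamma_J)$ that mixes many coordinates, is controlled by $I(W_r;J\mid W_{-r})$, which conditions on the fine information $W_{-r}$. The posterior given the coarse information is an average over $W_{-r}$ of posteriors given the fine information, but $J$ may depend on $W_{-r}$, so it is not immediate that this averaging goes in the right direction. I would verify it via joint convexity of KL applied to the mixture over $W_{-r}$ given $(J,\Gamma_J)$, using that the reference law $\Unif$ is the same for every component. If that fails, the fallback is to use $I(W_r;J)$ and the inequality $\sum_r I(W_r;J)\le I(W;J)$, which holds for independent coordinates, and accept conditioning on $\Gamma_{J,-r}$ through data processing.
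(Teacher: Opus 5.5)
Your proposal has a genuine gap at Step 3, where the tensorization inequality runs the wrong way. For independent coordinates, $H(W\mid J)=\sum_r H(W_r\mid W_{<r},J)\ge\sum_r H(W_r\mid W_{-r},J)$, hence $I(W;J)\le\sum_r I(W_r;J\mid W_{-r})$. The tensorization that does hold for independent $W_r$ is $\sum_r I(W_r;J)\le I(W;J)$, with no conditioning on $W_{-r}$. For a concrete failure, take i.i.d.\ fair bits $W_r$ and $J=W_1\oplus\cdots\oplus W_n$. Every $I(W_r;J\mid W_{-r})$ equals $\log 2$, so the sum is $n\log 2$ while $H(J)=\log 2$.

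Your Step 2 bound $\E\Delta_r\le\sqrt{\tfrac12 I(W_r;J\mid W_{-r})}$ is itself valid; the joint-convexity argument goes through. But each term can be as large as $H(J)$, so this route can only give $\sqrt{H(J)/2}$, with no $1/n$ decay.

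The fallback fails as well. No per-coordinate bound in terms of $I(W_r;J)$ is possible, because conditioning on $\Gamma_{J,-r}$ can reveal information about $W_r$, and data processing cannot remove conditioning. Take $W_r=(a_r,b_r)$ with independent fair bits, $\Omega_{j,r}=a_r$, $\Gamma_{j,r}=b_r$ (local fairness holds), and $J=a_1\oplus b_2$. Then $I(W_1;J)=0$, yet $(J,\Gamma_J)$ determines $a_1$, so $\Delta_1=1/2$.

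The missing idea is to condition, for each fixed $j$, on the whole side information $\Gamma_j$. This is neither as fine as $W_{-r}$ nor as coarse as conditioning on nothing.

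\begin{itemize}
\item Because $\Gamma_j$ is coordinatewise and the $W_r$ are independent, $\Law(W\mid\Gamma_j=\gamma)=\bigotimes_r\Law(W_r\mid\Gamma_{j,r}=\gamma_r)$ is still a product law.
\item Since $\Gamma_j$ is a function of $W$, the chain rule for KL gives $\KL(\Law(W\mid J=j)\,\|\,\Law(W))\ge\E[\KL(\Law(W\mid J=j,\Gamma_j)\,\|\,\Law(W\mid\Gamma_j))\mid J=j]$. The average of this over $J$ is at most $I(W;J)\le H(J)$.
\item Superadditivity of KL against a product reference, $\KL(P\,\|\,\bigotimes_r Q_r)\ge\sum_r\KL(P_r\,\|\,Q_r)$, splits this into exactly the coordinatewise terms $\KL(\Law(W_r\mid J=j,\Gamma_j)\,\|\,\Law(W_r\mid\Gamma_{j,r}))$ that you correctly isolated in Step 1.
\end{itemize}

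From there, your data processing, local fairness, Pinsker, and Jensen steps finish the proof.
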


In our context, we will condition on the following information. Fix the random target $T$, the learner's randomness $L$, the training sample $S=((X_j, Y_j))_{j=1}^{m}$ and its latent variables $((R_j, A_j))_{j=1}^{m}$, the permutations of all blocks appearing in the training sample $(\Pi_{R_j})_{j=1}^{m}$. Define the set of unseen blocks in the domain by $U=[N] \setminus \set{R_1,\dots, R_m}$, where $|U|:= n \geq N - m$.

We will condition on $G=(T, L, S, (R_j, A_j, \Pi_{R_j})_{j=1}^{m})$. The random variables $(\Pi_r)_{r\in U}$ remain independent uniform bijections.  Relabeling the
$n$ (unseen) coordinates if necessary, we apply \prettyref{lem:hidden-bit} with
\[
    W_r=\Pi_r,
    \qquad J=I,
    \qquad
    \Omega_{i,r}=\Omega_{i,r}(\Pi_r),
    \qquad
    \Gamma_{i,r}=\Gamma_{i,r}(\Pi_r).
\]
The uniformity requirement of the orientation bit $\Omega_{i,r}$ \eqref{eq:local-fairness} is satisfied by \eqref{eq:orientation-locally-fair}. The oracle response $I$ takes values in
$([K]\cup\{\bot\})^q$, so for every realization $g$ of $G$,
\begin{equation}\label{eq:transcript-entropy}
    H(I\mid G=g)\leq q\log(K+1).
\end{equation}
Let $\Gamma_I=(\Gamma_{I,u}(\Pi_u))_{u\in\mathsf U}$ and, for $r\in\mathsf U$, define
\begin{equation}\label{eq:orientation-bias-definition}
    \Delta_r=
    \TV\bigl(
    \Law(\Omega_{I,r}(\Pi_r)\mid G,I,\Gamma_I),
    \Unif(\{0,1\})
    \bigr).
\end{equation}
\prettyref{lem:hidden-bit}, \eqref{eq:transcript-entropy}, and
$n\geq N-m$ imply, for every realization $g$ of $G$,
\[
    \E\left[
    \frac1n\sum_{r\in\mathsf U}\Delta_r
    \;\middle|\; G=g
    \right]
    \leq
    \sqrt{\frac{q\log(K+1)}{2n}}
    \leq
    \sqrt{\frac{q\log(K+1)}{2(N-m)}}
    <\eta.
\]
Averaging over $G$ gives
\begin{equation}\label{eq:orientation}
    \E\left[\frac1n\sum_{r\in U}\Delta_r\right]<\eta.
\end{equation}
Let $R$ be the block index of an independent test example, and put
$\Delta_R=0$ when $R\notin U$.  The test index $R$ is uniform on $[N]$ and
independent of the hidden block bijections. Together with \eqref{eq:orientation}, this yields
\begin{equation}\label{eq:test-orientation}
    \E[\1\{R\in U\}\Delta_R]= \frac{n}{N}\E\brk{\frac1n\sum_{r\in U}\Delta_r} <\eta.
\end{equation}

\subsection{Putting the Pieces Together}

\begin{proof}[Proof of \prettyref{thm:lowerbound}]

    Draw an independent test example $(X,Y)$ with latent variables $(R,A)$ and define the event
    \begin{equation}\label{eq:good-event}
        \mathcal E
        =\{R\in U,\ T\notin Q(I),\ Q(I)\subseteq A\}.
    \end{equation}
    On event $\mathcal E$, both $Q^+_I$ and $Q(I)\setminus Q^+_I$ are subsets of $A$.
    Therefore
    \[
        z_{I,R}^+(\Pi_R)\in V_\Pi(R,A,+1),
        \qquad
        z_{I,R}^-(\Pi_R)\in V_\Pi(R,A,-1).
    \]
    Define the witness instance
    \[
        Z^{\mathrm{wit}}
        =\begin{cases}
            z_{I,R}^+(\Pi_R),&\text{if }Y=+1,\\
            z_{I,R}^-(\Pi_R),&\text{if }Y=-1.
        \end{cases}
    \]
    Then, on event $\mathcal E$, the robust loss is bounded from below as follows
    \begin{equation}\label{eq:witness-lower}
        \sup_{z\in \Uc(X)} \1\set{H(z)\neq Y} = \sup_{z\in V_\Pi(R,A,Y)} \1\set{H(z)\neq Y} \geq\1\{H(Z^{\mathrm{wit}})\neq Y\}.
    \end{equation}

    Condition on $G,I,\Gamma_I,R,A$ and suppose event $\mathcal E$ holds.
    Recall that $L_{I,R}(\Pi_R)=(u,v), u<v$ (either $u=z_{I,R}^+(\Pi_R)$ and $v=z_{I,R}^-(\Pi_R)$, or vice-versa).
    The classifier values $H(u),H(v)$ are fixed under this conditioning.  Under
    our convention for $\Omega_{I,R}$ \eqref{eq:omega-def}, the four possibilities are summarized in the following table
    \[
        \begin{array}{c|cc}
            &Y=+1&Y=-1\\ \hline
            \Omega_{I,R}=0 & H(u) \overset{?}{=} +1 & H(v) \overset{?}{=} -1\\
            \Omega_{I,R}=1 & H(v) \overset{?}{=} +1 & H(u) \overset{?}{=} -1.
        \end{array}
    \]
    If the $\Omega_{I,R}$ and $Y$ are both uniform, the point $u$ is tested once
    against $+1$ and once against $-1$, so exactly one of those two tests is an
    error. The same is true for $v$. Hence, for every fixed pair
    $H(u),H(v)$, exactly two of the four equally likely cases produce an error. Therefore, it follows that
    \begin{equation}\label{eq:fair-orientation-half}
        \Ex{\Omega\sim\Unif(\{0,1\})}\brk{g(\Omega)} = \frac{1}{2},\qquad g(\omega)
        =\Prob_Y\{H(Z^{\mathrm{wit}})\neq Y
            \mid\Omega_{I,R}=\omega,
            G,I,\Gamma_I,R,A\}.
    \end{equation}

    The actual posterior law of $\Omega_{I,R}$ may not be uniform, but we can relate it to uniform by the variational characterization of total variation,
    \begin{equation}\label{eq:posterior-error}
        \E[g(\Omega_{I,R})]\geq \Ex{\Omega\sim\Unif(\{0,1\})}\brk{g(\Omega)} - \Delta_R \geq\frac12-\Delta_R,
    \end{equation}
    where $\Delta_R$ is the total variation distance \eqref{eq:orientation-bias-definition}.

    Combining \eqref{eq:witness-lower} and \eqref{eq:posterior-error}, then
    averaging, and then \eqref{eq:test-orientation}, gives
    \begin{align}
        \E\Risk(H;P_{\Pi,T})
        &\geq
        \E\brk{\1_{\mathcal E} \E\brk{\1\set{H(Z^{\mathrm{wit}})\neq Y} \middle| G,I,\Gamma_I, R, A}}\notag\\
        &\geq
        \E\left[
        \1_{\mathcal E}\left(\frac12-\Delta_R\right)
        \right] \notag\\
        &\geq
        \frac12\Prob(\mathcal E)
        -\E[\1\{R\in\mathsf U\}\Delta_R]\notag\\
        &\geq
        \frac12\Prob(\mathcal E)-\eta.                     \label{eq:risk-lower}
    \end{align}

    \paragraph{Probability of the good event.} It remains to bound the probability of event $\mathcal E$ \eqref{eq:good-event}, $\Prob(\mathcal E)$, from below. Conditioning on $G,I,\Gamma_I$ fixes $T, Q(I), U$, but $A$ remains random. If $T\in Q$, then the event $Q\subseteq A$ is impossible because $A \subseteq [K]\setminus {T}$ (as specified in $P_{\Pi, T}$). If $T\notin Q$, then every index in $Q$ is included in $A$ with probability $p$. Thus
    \[
        \Prob\set{Q\subseteq A \mid G, I, \Gamma_I} = \1\set{T\notin Q}p^{|Q|}\geq \1\set{T\notin Q}p^{q}.
    \]
    Consequently,
    \[
        \Prob(\mathcal E) = \E\brk{\1\set{R\in U, T\notin Q}p^{|Q|}} \geq p^{q} \Prob\set{R\in U, T\notin Q}.
    \]
    By a union bound,
    \[
        \Prob\set{R\in U, T\notin Q} \geq 1 - \Prob\set{R\notin U} - \Prob\set{T\in Q}.
    \]
    Conditioning on the training blocks $R_1,\dots, R_m$, $\Prob\set{R\notin U \mid R_1,\dots, R_m}\leq m/N$. Therefore,
    
    $\Prob\set{R\notin U}\leq m/N$. \prettyref{lem:target-return} shows that $\Prob\set{T\notin Q} < \eta$. Thus,
    \begin{equation} \label{eqn:eventlowerbound}
        \Prob(\mathcal E) \geq p^q \prn{1-\frac{m}{N} - \eta}.
    \end{equation}
    Note that the choice of parameters in the construction ensures that $m/N < \eta$. By definition, recall that $p = 1 - \eta/d^\star$ and $q=d^\star - 1$. By Bernoulli's inequality, $p^q=(1-\eta/d^\star)^{d^\star-1}\geq 1- (d^\star-1)\eta/d^\star > 1-\eta$. Combining the above, with the choice of $\eta=1/1000$, it follows that
    \[
        \E_{\Pi, T}\brk{\Risk(H; P_{\Pi, T})} > \frac{1}{2} (1-\eta)(1-2\eta) - \eta > 0.49.
    \]
    By the probabilistic method, there must exist a fixed pair $\pi, t$ (which may depend on learner $\mathbb{A}$) such that $\E\brk{\Risk(H; P_{\pi, t})} > 0.49$. Applying a variant of Markov's inequality concludes the proof.
\end{proof}

\section*{AI Disclosure}

The author used ChatGPT Pro (5.5 and 5.6) and Codex during the development of several ideas and proofs presented in this paper. The author also used these models to assist with some of the writing.

\bibliographystyle{plainnat}
\bibliography{refs}

\appendix

\section{Auxiliary Lemmata for Upper Bound}
\label{app:appendix1}

The proof is a straightforward extension of the suffix averaging argument of \citet*[][Theorem 2.1]{DBLP:conf/focs/Aden-AliCSZ23} applied to our margin-robust loss. Observe that the bagging predictor \eqref{eqn:full-bagging} is \emph{permutation invariant}. That is, for any sequence $S$ and permutation $\pi$, $\widehat{\B}_S = \widehat{\B}_{S_\pi}$. Furthermore, by \prettyref{lem:leave-one-out-margin}, its leave-one-out error with respect to the margin-robust loss $(x,y) \mapsto \sup_{z\in \Uc(x)} \1\{y\widehat{\B}_S(z)\leq \gamma\}$ is bounded from above by $O(d/n(1-\gamma)^2)$.

The following Lemmas are the margin-robust counterparts of the forward and
reverse martingale bounds in Lemmas~4.2 and~4.3
\citep*{DBLP:conf/focs/Aden-AliCSZ23}. Their proofs apply verbatim:
the displayed margin-robust loss is measurable and takes values in
\([0,1]\), while the reverse martingale argument uses only
bounded leave-one-out error (\prettyref{lem:leave-one-out-margin}) and
permutation invariance.

\begin{lemma}[Forward martingale bound, Lemma 4.2 in \cite*{DBLP:conf/focs/Aden-AliCSZ23}]
    \label{lem:margin-robust-forward}
    Let $S=((X_1,Y_1),\ldots,(X_n,Y_n))\sim P^n$. For every \(\eta,\rho\in(0,1)\), with probability at least \(1-\rho\),
    \[
        \begin{split}
            &\sum_{t=n/4}^{n-1}
            \Ex{(X,Y)\sim P}
            \brk{
                \sup_{Z\in\Uc(X)}
                \1\left\{
                    Y\widehat{\B}_{S_{\leq t}}(Z)\leq\gamma
                    \right\}
            }
            \\
            &\qquad\leq
            \frac{\eta e^\eta}{e^\eta-1}
            \sum_{t=n/4}^{n-1}
            \sup_{Z\in\Uc(X_{t+1})}
            \1\left\{
                Y_{t+1}\widehat{\B}_{S_{\leq t}}(Z)\leq\gamma
                \right\}
            +
            \frac{e^\eta}{e^\eta-1}\log\frac{1}{\rho}.
        \end{split}
    \]
\end{lemma}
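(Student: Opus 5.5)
The plan is to follow the forward-martingale (Freedman/Bernstein-type) argument of Lemma~4.2 in \citet*{DBLP:conf/focs/Aden-AliCSZ23} directly. For $t\in J_n$, define the bounded loss
\[
\ell_t := \sup_{Z\in\Uc(X_{t+1})}\1\set{Y_{t+1}\widehat{\B}_{S_{\leq t}}(Z)\leq\gamma}\in\{0,1\},
\]
and its conditional mean
\[
\mu_t := \Ex{(X,Y)\sim P}\brk{\sup_{Z\in\Uc(X)}\1\set{Y\widehat{\B}_{S_{\leq t}}(Z)\leq\gamma}}.
\]
Let $\mathcal{G}_t=\sigma(S_{\leq t})$. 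Since $\widehat{\B}_{S_{\leq t}}$ is a deterministic function of $S_{\leq t}$ (the $\RERM$ oracle is deterministic) and $(X_{t+1},Y_{t+1})\sim P$ is independent of $\mathcal{G}_t$, we get $\E[\ell_t\mid\mathcal{G}_t]=\mu_t$. The margin-robust loss is assumed measurable, as the paper stipulates. The only property needed is that $\ell_t\in[0,1]$ and that $\mu_t$ is $\mathcal{G}_t$-measurable.

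The key step is an exponential supermartingale. Because $\ell_t\in[0,1]$, convexity of $x\mapsto e^{-\eta x}$ gives $e^{-\eta \ell_t}\leq 1-(1-e^{-\eta})\ell_t$. Taking conditional expectations,
\[
\E[e^{-\eta\ell_t}\mid\mathcal{G}_t]\leq 1-(1-e^{-\eta})\mu_t\leq \exp(-(1-e^{-\eta})\mu_t).
\]
Hence
\[
M_s=\exp\Bigl(\sum_{t=n/4}^{s}\bigl[(1-e^{-\eta})\mu_t-\eta\ell_t\bigr]\Bigr)
\]
is a nonnegative supermartingale with respect to $\mathcal{G}_{s+1}$, and iterating the tower property gives $\E[M_{n-1}]\leq1$. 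Markov's inequality then shows that, with probability at least $1-\rho$,
\[
(1-e^{-\eta})\sum_t\mu_t\leq \eta\sum_t\ell_t+\log(1/\rho).
\]
Dividing by $1-e^{-\eta}=(e^\eta-1)/e^\eta$ produces exactly the constants $\eta e^\eta/(e^\eta-1)$ and $e^\eta/(e^\eta-1)$ in the statement.

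The main point to check carefully is the measurability and filtration bookkeeping. Each $\mu_t$ must be determined by $S_{\leq t}$ alone, which holds because $\widehat{\B}_{S_{\leq t}}$ averages deterministic $\RERM$ outputs over bootstraps of $S_{\leq t}$. The fresh point $(X_{t+1},Y_{t+1})$ must be independent of $S_{\leq t}$, which holds by the i.i.d.\ draw. Once these are in place, the inequality $e^{-\eta x}\leq 1-(1-e^{-\eta})x$ on $[0,1]$ and the supermartingale property follow routinely. No realizability or VC assumption is used here; those enter only through the companion reverse-martingale bound, which controls $\sum_t\ell_t$ via \prettyref{lem:leave-one-out-margin}.
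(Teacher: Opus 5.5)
Your proposal is correct and follows the paper's route: the paper invokes the forward martingale bound of Lemma~4.2 in \citet*{DBLP:conf/focs/Aden-AliCSZ23} verbatim, noting only that the margin-robust loss is measurable and $[0,1]$-valued. Your exponential supermartingale argument (using $e^{-\eta\ell}\leq 1-(1-e^{-\eta})\ell$ on $[0,1]$, independence of $(X_{t+1},Y_{t+1})$ from $S_{\leq t}$, and Markov's inequality) is the standard proof of that lemma, reproduces the constants $\eta e^\eta/(e^\eta-1)$ and $e^\eta/(e^\eta-1)$ exactly, and correctly needs no realizability or VC assumption.
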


\begin{lemma}[Reverse martingale bound, Lemma 4.3 in \cite*{DBLP:conf/focs/Aden-AliCSZ23}]
    \label{lem:margin-robust-reverse}
    Suppose that $P$ is a distribution over $\X\times \Y$ satisfying $\inf_{f^\star \in \Fc} \Risk(f^\star; P)=0$ (i.e., robustly realizable). Let $S=((X_1,Y_1),\ldots,(X_n,Y_n))\sim P^n$. Let $M_n=O(d/(1-\gamma)^2)$. For every \(\lambda,\rho\in(0,1)\), with probability at least
    \(1-\rho\),

    \[
        \sum_{t=n/4}^{n-1} \sup_{Z\in\Uc(X_{t+1})}\1\left\{Y_{t+1}\widehat{\B}_{S_{\leq t}}(Z)\leq\gamma \right\} \leq \frac{e^\lambda-1}{\lambda}
        M_n \sum_{m=n/4+1}^{n}\frac{1}{m} + \frac{1}{\lambda}\log\frac{1}{\rho}.
    \]
\end{lemma}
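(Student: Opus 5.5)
The plan is to reproduce the reverse-martingale argument of \citet*{DBLP:conf/focs/Aden-AliCSZ23}. The only two properties of the predictor that it uses are permutation invariance of $\widehat{\B}$ and the leave-one-out bound of \prettyref{lem:leave-one-out-margin}. For $t\in J_n$, write
\[
    \ell_t := \sup_{Z\in\Uc(X_{t+1})}\1\set{Y_{t+1}\widehat{\B}_{S_{\leq t}}(Z)\leq\gamma}\in\set{0,1}.
\]
Define a decreasing family of $\sigma$-algebras $\mathcal G_t$ generated by the \emph{unordered} multiset $\set{(X_j,Y_j)}_{j\leq t+1}$ together with the ordered tail $(X_j,Y_j)_{j\geq t+2}$. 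Then $\mathcal G_{t-1}\supseteq\mathcal G_t$, so going from $t=n-1$ down to $t=n/4$ is a filtration. Because $\widehat{\B}_{S_{\leq t}}$ depends only on the multiset of $S_{\leq t}$, the loss $\ell_t$ is a function of that multiset and of the point $(X_{t+1},Y_{t+1})$. Both are determined by $\mathcal G_{t-1}$, so $\ell_t$ is adapted in the reverse direction.

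The key step is the conditional-mean bound. Condition on $\mathcal G_t$ and let $T$ be the multiset of the first $t+1$ points. By exchangeability of the i.i.d.\ sample, $(X_{t+1},Y_{t+1})$ is a uniformly random element of $T$, and the remaining $t$ points form $T_{-i}$. By permutation invariance, $\widehat{\B}_{S_{\leq t}}=\widehat{\B}_{T_{-i}}$. Hence $\E[\ell_t\mid\mathcal G_t]$ equals exactly the leave-one-out margin-robust error of $\widehat{\B}$ on $T$.

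Since $P$ is robustly realizable, $T$ is almost surely a robustly realizable sequence of length $t+1\geq 2$. \prettyref{lem:leave-one-out-margin} then gives
\[
    \E[\ell_t\mid\mathcal G_t]\leq \frac{C}{(1-\gamma)^2}\cdot\frac{d}{t+1} =: \frac{M_n}{t+1}.
\]

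Next comes the exponential step. Since $\ell_t\in\set{0,1}$, we have $e^{\lambda\ell_t}=1+(e^\lambda-1)\ell_t$. Combining this with $1+x\leq e^x$ gives
\[
    \E[e^{\lambda\ell_t}\mid\mathcal G_t]\leq \exp\prn{(e^\lambda-1)M_n/(t+1)}.
\]
Peeling the conditional expectations one at a time, from $t=n/4$ (the most refined $\sigma$-algebra) upward to $t=n-1$, shows that
\[
    \E\exp\prn{\lambda\sum_t\ell_t-(e^\lambda-1)M_n\sum_t \tfrac{1}{t+1}}\leq 1.
\]
Markov's inequality then yields, with probability at least $1-\rho$,
\[
    \lambda\sum_{t=n/4}^{n-1}\ell_t\leq (e^\lambda-1)M_n\sum_{m=n/4+1}^{n}\frac1m+\log\frac1\rho.
\]
Dividing by $\lambda$ gives the claim.

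I expect the main obstacle to be the bookkeeping in the conditioning step. One must check carefully that $\ell_t$ is $\mathcal G_{t-1}$-measurable, which is exactly where permutation invariance is needed, and that the conditional law of the $(t+1)$-st point given the multiset is uniform over its elements, with multiplicities handled by indexing positions. A secondary technical point is measurability of the supremum over $\Uc(X)$. As in the paper's remarks, I would simply assume the margin-robust loss is measurable.
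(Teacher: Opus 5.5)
Your proposal is correct and is the same argument the paper relies on: the paper says the reverse-martingale proof of Lemma~4.3 in \citet*{DBLP:conf/focs/Aden-AliCSZ23} applies verbatim using only permutation invariance of $\widehat{\B}$ and the leave-one-out bound of \prettyref{lem:leave-one-out-margin}, and that is what you carry out. You condition on the exchangeable $\sigma$-algebra of the first $t+1$ points to identify $\E[\ell_t\mid\mathcal G_t]$ with the leave-one-out margin-robust error, which is at most $M_n/(t+1)$, and then conclude with the exponential supermartingale $e^{\lambda\ell}\leq 1+(e^\lambda-1)\ell$, peeling, and Markov's inequality, which gives exactly the stated constants.
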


\begin{proof}[Proof of \prettyref{lem:high-prop}]
    Apply \prettyref{lem:margin-robust-forward}
    and \prettyref{lem:margin-robust-reverse} with confidence parameter
    \(\rho=\delta/2\). By a union bound, with probability at least
    \(1-\delta\),
    \[
        \begin{split}
            \sum_{t=n/4}^{n-1}
            \Ex{(X,Y)\sim P}
            &\brk{
                \sup_{Z\in\Uc(X)}
                \1\left\{
                    Y\widehat{\B}_{S_{\leq t}}(Z)\leq\gamma
                    \right\}
            }
            \\
            &\quad\leq
            \frac{\eta e^\eta}{e^\eta-1}
            \left(
            \frac{e^\lambda-1}{\lambda}
            M_n
            \sum_{m=n/4+1}^{n}\frac{1}{m}
            +
            \frac{1}{\lambda}\log\frac{2}{\delta}
            \right)
            +
            \frac{e^\eta}{e^\eta-1}\log\frac{2}{\delta}.
        \end{split}
    \]
    Since \(x\mapsto 1/x\) is decreasing,
    \[
        \sum_{m=n/4+1}^{n}\frac{1}{m}
        \leq
        \int_{n/4}^{n}\frac{dx}{x}
        =
        \log 4.
    \]
    Consequently,

    \[
        \begin{aligned}
            &\frac{4}{3n} \sum_{t=n/4}^{n-1} \Ex{(X,Y)\sim P} \brk{\sup_{Z\in\Uc(X)}\1\left\{Y\widehat{\B}_{S_{\leq t}}(Z)\leq\gamma\right\}} \\
            &\qquad\leq
            \frac{4}{3}
            \frac{\eta e^\eta}{e^\eta-1}
            \log(4)
            \frac{e^\lambda-1}{\lambda}
            \frac{M_n}{n}
            +
            \frac{4}{3}
            \left(
            \frac{\eta e^\eta}
            {\lambda(e^\eta-1)}
            +
            \frac{e^\eta}{e^\eta-1}
            \right)
            \frac{1}{n}\log\frac{2}{\delta}.
        \end{aligned}
    \]
    Taking $\lambda=0.82$ and $\eta=0.78$ gives
    \[
        \frac{4}{3}
        \frac{\eta e^\eta}{e^\eta-1}
        \log(4)
        \frac{e^\lambda-1}{\lambda}
        \approx 4.1246<4.82
        \quad
        \text{and}
        \quad
        \frac{4}{3}
        \left(
        \frac{\eta e^\eta}
        {\lambda(e^\eta-1)}
        +
        \frac{e^\eta}{e^\eta-1}
        \right)
        \approx 4.8037<4.82.
    \]
    It follows that
    \[
        \frac{4}{3n}
        \sum_{t=n/4}^{n-1}
        \Ex{(X,Y)\sim P}
        \brk{
            \sup_{Z\in\Uc(X)}
            \1\left\{
                Y\widehat{\B}_{S_{\leq t}}(Z)\leq\gamma
                \right\}
        }
        \leq
        4.82
        \prn{
            \frac{M_n}{n}
            +
            \frac{1}{n}\log\frac{2}{\delta}
        },
    \]
    which proves the lemma.
\end{proof}

\section{Auxiliary Lemmata for Lower Bound}
\label{app:appendix2}

\itlemma*

\begin{proof}
    The point of the lemma is that the transcript value $J=j$ may select a
    different bit $\Omega_{j,r}$ and different side information $\Gamma_{j,r}$.
    We therefore keep $j$ fixed until the last averaging step.

    Fix $j$ and write $\Gamma_j=(\Gamma_{j,1},\ldots,\Gamma_{j,n})$.  Since
    $\Gamma_j$ is a deterministic function of $W$, the relative-entropy chain
    rule gives
    \begin{align}
        \KL\bigl(\Law(W\mid J=j)\,\|\,\Law(W)\bigr)
        &=
        \KL\bigl(\Law(\Gamma_j\mid J=j)\,\|\,\Law(\Gamma_j)\bigr) \notag\\
        &\qquad+
        \E\left[
        \KL\bigl(
        \Law(W\mid J=j,\Gamma_j)
        \,\|\,
        \Law(W\mid\Gamma_j)
        \bigr)
        \;\middle|\;J=j
        \right].                                                     \label{eq:KL-chain}
    \end{align}
    The first term on the right is nonnegative.  Multiplying by
    $\Prob\{J=j\}$, summing over $j$, and using the definition of mutual
    information therefore yields
    \begin{equation}\label{eq:conditional-KL-budget}
        \sum_j\Prob\{J=j\}
        \E\left[
        \KL\bigl(
        \Law(W\mid J=j,\Gamma_j)
        \,\|\,
        \Law(W\mid\Gamma_j)
        \bigr)
        \;\middle|\;J=j
        \right]
        \leq I(W;J)\leq H(J).
    \end{equation}
    The second law in each summand conditions on the fixed statistic
    $\Gamma_j$ but not on the event $J=j$; the KL divergence therefore measures
    the additional information supplied by that event.

    We next decompose the left side of \eqref{eq:conditional-KL-budget} across
    coordinates.  For fixed $j$ and every $\gamma$ in the support of $\Gamma_j$,
    independence of the $W_r$ and the coordinatewise form of $\Gamma_j$ imply
    \begin{equation}\label{eq:conditional-product}
        \Law(W\mid\Gamma_j=\gamma)
        =\bigotimes_{r=1}^n
        \Law(W_r\mid\Gamma_{j,r}=\gamma_r).
    \end{equation}
    Indeed, before normalization the conditional mass at
    $w=(w_1,\ldots,w_n)$ factors as
    \[
        \prod_{r=1}^n
        \Prob\{W_r=w_r\}
        \1\{\gamma_{j,r}(w_r)=\gamma_r\}.
    \]
    For an arbitrary law $P$ on a product space, with marginals $P_r$, and a
    product reference law $\bigotimes_r Q_r$, one has
    \begin{equation}\label{eq:KL-marginals}
        \KL\left(P\,\middle\|\,\bigotimes_{r=1}^nQ_r\right)
        =\KL\left(P\,\middle\|\,\bigotimes_{r=1}^nP_r\right)
        +\sum_{r=1}^n\KL(P_r\|Q_r)
        \geq\sum_{r=1}^n\KL(P_r\|Q_r).
    \end{equation}
    Applying \eqref{eq:KL-marginals} conditionally, with the product reference
    law in \eqref{eq:conditional-product}, and then averaging gives
    \begin{align}
        &\sum_j\Prob\{J=j\}
        \E\left[
        \sum_{r=1}^n
        \KL\bigl(
        \Law(W_r\mid J=j,\Gamma_j)
        \,\|\,
        \Law(W_r\mid\Gamma_{j,r})
        \bigr)
        \;\middle|\;J=j
        \right]
        \leq H(J).                                      \label{eq:coordinate-KL-budget}
    \end{align}
    Again, the second law in each summand conditions only on the fixed local
    statistic $\Gamma_{j,r}$, not on the event $J=j$.

    For each fixed $j,r$, the map $W_r\mapsto\Omega_{j,r}$ is deterministic.
    Applying the data processing inequality to every summand in
    \eqref{eq:coordinate-KL-budget}, followed by
    \eqref{eq:local-fairness}, gives
    \begin{align}
        &\E\brk{\sum_{r=1}^{n} \KL\bigl(\Law(\Omega_{J,r}\mid J,\Gamma_J)\,\|\,\Unif(\{0,1\})\bigr)} \notag \\
        &= \sum_{j} \Prob\set{J=j} \E\brk{\sum_{r=1}^{n} \KL\bigl(\Law(\Omega_{j,r}\mid J=j,\Gamma_j)\,\|\,\Law(\Omega_{j,r}\mid\Gamma_{j,r})\bigr) \;\middle|\;J=j } \notag \\
        &\leq \sum_j\Prob\{J=j\}\E\left[\sum_{r=1}^n\KL\bigl(\Law(W_r\mid J=j,\Gamma_j)\,\|\,
        \Law(W_r\mid\Gamma_{j,r})\bigr)\;\middle|\;J=j\right] \notag\\
        &\leq H(J). \label{eq:bit-KL-budget}
    \end{align}
    Applying Pinsker's inequality, with natural logarithms, gives
    $\Delta_r^2\leq D_r/2$, where $D_r= \KL\bigl(\Law(\Omega_{J,r}\mid J,\Gamma_J)\,\|\,\Unif(\{0,1\})\bigr)$ and $\Delta_r$ is the total variation distance. Finally, by Cauchy--Schwarz and Jensen's inequality,
    \[
        \E\brk{\frac1n\sum_{r=1}^n\Delta_r}
        \leq
        \sqrt{\E\brk{\frac1n\sum_{r=1}^n\Delta_r^2}} \leq \sqrt{\E\brk{\frac{1}{2n}\sum_{r=1}^{n}D_r}}\leq \sqrt{\frac{H(J)}{2n}}.
    \]
    This proves \eqref{eq:hidden-bit}.
\end{proof}

\end{document}